%% file: AnonymousSubmission2027.tex
\documentclass[letterpaper]{article} 
\usepackage{aaai2027}
\usepackage[hyphens]{url}  
\usepackage{graphicx} 
\usepackage{natbib}  
\usepackage{caption} 
\usepackage{algorithm}
\usepackage{algpseudocode}

\usepackage{newfloat}
\usepackage{listings}
\DeclareCaptionStyle{ruled}{labelfont=normalfont,labelsep=colon,strut=off} 
\floatstyle{ruled}
\newfloat{listing}{tb}{lst}{}
\floatname{listing}{Listing}

\usepackage{booktabs}

\usepackage{microtype}
\usepackage{multirow}
\usepackage{arydshln}
\usepackage{adjustbox}
\usepackage{subcaption}
\usepackage{amsthm}
\newtheorem{theorem}{Theorem}
\usepackage{booktabs}
\usepackage{array}
\usepackage{tablefootnote}
\usepackage{amsmath}
\usepackage{amssymb}
\usepackage{scalerel}
\usepackage{siunitx}
\usepackage{tabularx}
\usepackage[table]{xcolor}
\usepackage{soul}
\usepackage{makecell}

\usepackage{inconsolata}
\usepackage{caption}
\usepackage{enumitem}

\nocopyright
\title{\includegraphics[width=0.04\textwidth]{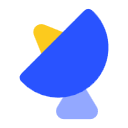}\textsc{ValueGraph}: Value-Signal Guided Graph Pre-training for Contextualized \\ User Representation}

\author {
    Yitong Han\textsuperscript{\rm 1},
    Wei Gao\textsuperscript{\rm 1},
    Yi Zhao\textsuperscript{\rm 1},
    Prasanta Bhattacharya\textsuperscript{\rm 2},\\
    Fengzhu Zeng\textsuperscript{\rm 1},
    Mohammad Amanlou\textsuperscript{\rm 1}
}
\affiliations {
    \textsuperscript{\rm 1}School of Computing and Information Systems, Singapore Management University, Singapore\\
    \textsuperscript{\rm 2}Institute of Advanced Intelligence and Computing, A*STAR, Singapore\\
    \{ythan,weigao,yizhao,fzzeng,mohammada\}@smu.edu.sg,~~ 
    prasanta\_bhattacharya@a-star.edu.sg
}

\begin{document}

\maketitle

\begin{abstract}
Value signals are aggregated user-level moral representations that capture users' inferred value-related tendencies from their online discourse. User behavior on social media is shaped not only by what users say or whom they interact with, but also by the value signal through which they express attitudes. Existing user representation methods largely miss this value-relevant dimension. We propose \textbf{\textsc{ValueGraph}}, a graph pre-training framework that uses automatically inferred moral-value signals as noisy auxiliary signals for contextualized user representation. From post-reply graphs, \textsc{ValueGraph} learns semantic and structural representations and further aligns users through relative value similarity with contrastive and clustering objectives. Rather than treating inferred values as gold psychological labels, \textsc{ValueGraph} uses them as soft constraints for representation learning. Experiments on stance detection and twitter bot detection show consistent gains over strong text-based, graph-based, and text-only LLM baselines, highlighting value-signal guidance as a useful inductive bias for socially informed user modeling.\footnote{Code is released at \url{https://github.com/HanYiton/ValueGraph}}
\end{abstract}


\section{Introduction}\label{introduction}
Understanding user behavior is central to social media analytics, recommendation systems, and personalized AI. User embedding models map user content and interactions to dense representations for downstream tasks such as profiling, recommendation, stance detection, and social influence analysis, and must generalize to unseen users and evolving social contexts~\citep{pan2019social}. 

Most existing methods learn user representations from text~\citep{benton-etal-2016-learning,ding-etal-2017-multi}, visual content~\citep{do2018twitter}, or network structure~\citep{rahimi-etal-2015-twitter,pick2022stem}, and optimize them with self-supervised or graph-based objectives~\citep{Perozzi2014DeepWalkOL,Sun2020MultiStageSL,Grover2016node2vecSF,Donnat2018LearningSN}. These objectives often assume that users with similar content exposure or graph neighborhoods should have similar embeddings. However, similarity in interaction does not necessarily indicate similarity in the underlying factors that drive user behaviors. Users may engage with the same event because of shared exposure, controversy, or platform dynamics while expressing substantially different motivations and attitudes. When social contexts shift, such surface-level similarity can become brittle~\citep{Zhao2021DataAF,Hafidi2020GraphCLCS,hassani2020contrastive,Sun2020InfoGraphUA}. For example, news recommendation systems that rely on interaction-based similarity may mistake shared exposure or controversy-driven interactions for shared preferences or values. Users who engage with the same event may hold opposing attitudes, but such spurious behavioral associations can cause the system to repeatedly recommend similar content to these users, reinforcing homogeneous information exposure and potentially amplifying polarization or extremist dynamics~\citep{10.1145/3351095.3372879}. These limitations suggest that effective user representations should capture not only observable behaviors, such as what users say or where they interact, but also latent factors underlying these behaviors. 

To address these challenges, we adopt Moral Foundations Theory (MFT)~\citep{haidt2007moral} as our theoretical foundation. MFT provides a framework for understanding moral judgments reflected in social discourse and defines interpretable moral dimensions that have been widely studied in computational social science~\citep{GRAHAM201355}. Building upon recent advances in text-based moral value prediction, we derive 10-dimensional user-level moral feature vectors by aggregating post-level moral scores output from MoralBERT~\citep{moralbert}, and refer to these vectors as value signals. Without explicitly modeling such value signals, conventional embedding models may struggle to distinguish users who exhibit similar interaction patterns but hold different stances and behavioral tendencies. This motivates us to leverage value signals as an auxiliary supervisory cue to learn more robust user representations.

We therefore propose \textsc{ValueGraph}, a contextualized user representation framework that incorporates inferred value signals to regularize graph pre-training. In this work, contextualized refers to representations learned by jointly modeling users’ textual content and their interaction context in conversation graphs.
Concretely, \textsc{ValueGraph} operates on a post-reply graph and learns user representations in two stages. First, it uses masked graph autoencoding to learn semantic and structural post representations from conversation graphs. Second, it aggregates post embeddings into user embeddings and constructs positive and negative user pairs according to inferred value similarity, which is derived from psychology and computational ethics literature~\citep{moralbert,nguyen2024measuring,guo2023data}. A user-level contrastive loss aligns users with similar inferred value profiles, while a clustering loss encourages coherent and separated regions in the embedding space. These objectives guide the encoder to represent \emph{what} users engage with alongside the value signals associated with their behavior.

We evaluate \textsc{ValueGraph} on stance detection and Twitter bot detection. Results show consistent gains over graph pre-training methods, Pre-trained Language Model (PLM) encoders, inductive GNNs, and text-only LLM baselines, while ablations confirm that the improvements come from jointly modeling semantic, structural, and value-signal information. 
Our main contributions are summarized as follows:
\begin{itemize}
   \item We propose \textsc{ValueGraph}, a value-signal guided graph pre-training framework that leverages inferred value signals as noisy auxiliary supervision for contextualized user representation learning.
   \item We design a hierarchical objective that combines masked graph autoencoding with contrastive learning and clustering to jointly capture semantic, structural, and value information.
   \item We provide a mechanism-level analysis of the proposed objectives, characterizing value-signal similarity preservation and cluster compactness/separation in the learned embedding space.
   \item Experiments show that \textsc{ValueGraph} achieves the best performance among compared user representation methods on both stance detection and bot detection tasks, validating the effectiveness of incorporating value signals into graph-based user representation learning.
\end{itemize}

\section{Related Work}

\paragraph{User Representation.}
Social media user representations map high-dimensional user features into dense embeddings~\citep{pan2019social}. Prior work learns such representations from text~\citep{benton-etal-2016-learning,ding-etal-2017-multi}, visual content~\citep{do2018twitter}, network structure~\citep{rahimi-etal-2015-twitter}, or multi-view fusion~\citep{zhang2017user,10.1145/3178876.3186026,ribeiro2018characterizing}. These methods are effective, but rarely use value-relevant moral signals as explicit inductive bias.

\paragraph{Graph Pre-training.}
GNN pre-training improves representation generalization by exploiting graph structure and neighborhood similarity~\citep{Perozzi2014DeepWalkOL,Sun2020MultiStageSL,Grover2016node2vecSF,Donnat2018LearningSN,Zhang2019ProNEFA,Tang2015LINELI,Zhao2021DataAF}. Later work explores similar-domain and cross-domain pre-training to improve transferability~\citep{Hafidi2020GraphCLCS,hassani2020contrastive,Sun2020InfoGraphUA,Zhu2020DeepGC,Hu2020Strategies,you2020graph,Hu2020GPTGNNGP,Lu2021LearningTP,Qiu2020GCCGC}. However, existing graph pre-training methods rarely model value signal or motivational dimensions in user behavior.

\paragraph{AI and Human Values.}
Recent work detects moral-values in text using transformer-based models~\citep{moralbert,nguyen2024measuring,guo2023data,zangari2025me2}, while value-alignment research focuses on instruction following and preference-based reward modeling~\citep{ouyang2022training,christiano2017rlhf}. Instead of focusing on text-level moral-value detection or language-model alignment, we use automatically inferred value signals as auxiliary signals for graph-based user representation learning.

\section{Problem Formulation} 

\begin{figure*}[!htbp]
    \centering
    \includegraphics[width=0.90\linewidth]{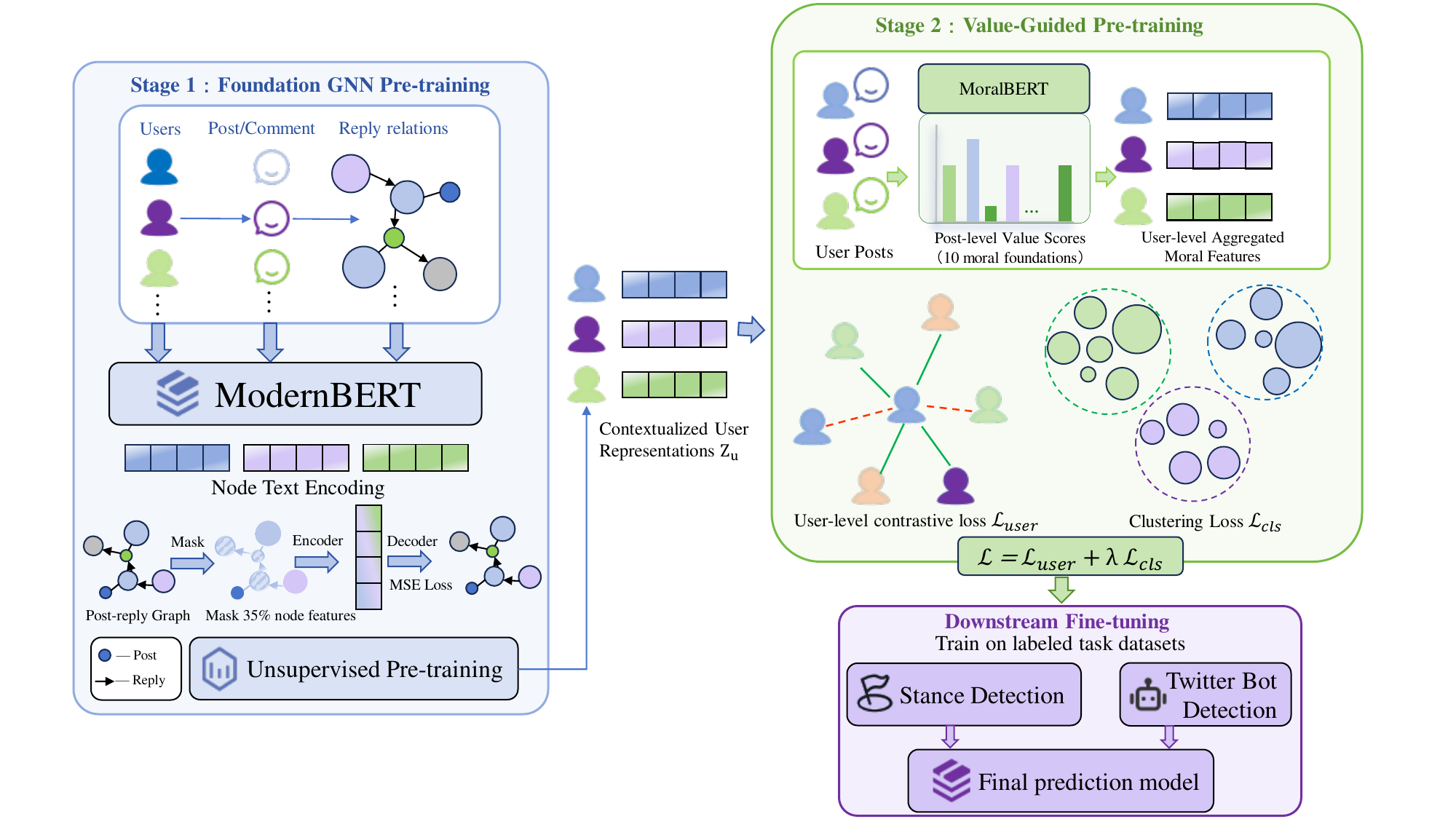}
    \caption{The framework of \textsc{ValueGraph}. In Stage 1, we construct a post-reply graph and perform GraphMAE2-based unsupervised pre-training to learn contextualized post representations by jointly modeling semantic and structural information. The resulting post embeddings are aggregated into user representations.
    In Stage 2, we introduce moral value signals extracted by MoralBERT and optimize user representations through value-guided contrastive learning and clustering objectives. The learned user representations are then used for downstream fine-tuning on stance detection and Twitter bot detection.}
    \label{fig:framework}
\end{figure*}

We are given a dataset $\mathcal{G}=\{\mathcal{G}_i\}_{i=1}^n$ of social conversation graphs. Each graph $\mathcal{G}_i = (\mathcal{V}_i, \mathcal{E}_i)$ represents a conversation thread, where $\mathcal{V}_i$ is the set of posts and $\mathcal{E}_i$ is the set of directed reply edges between posts\footnote{We focus on post-reply relations, since follow/friendship relations are often platform-dependent and unavailable across datasets. This design enables a more general evaluation of value-signal guided graph pre-training.}. Let $\mathcal{U}$ denote all users who authored posts in $\mathcal{G}$. Since each post is authored by exactly one user, we define an authorship mapping $a: \mathcal{V}_i \to \mathcal{U}$, where $u=a(v)$ indicates that post $v$ is authored by user $u$. Let $\mathcal{V}=\bigcup_{i=1}^{n}\mathcal{V}_i$ be the set of all posts. We define $\phi: \mathcal{U} \to 2^\mathcal{V}$ as a post aggregation function, where $\phi(u)=\{v\in\mathcal{V}\mid a(v)=u\}$ is the set of posts authored by $u$.

The goal of user representation learning is to learn a parameterized encoder $f_{\theta}: u \to z_u \in \mathbb{R}^d$ that maps each user $u\in \mathcal{U}$ to a compact embedding $z_u$. The embedding $z_u$ is generated by aggregating semantic information from the user's posts $\phi(u)$ and structural context from post-post reply relations. The resulting representations are expected to capture semantic, relational, and value-relevant behavioral cues that support downstream behavior and content understanding~\citep{pan2019social}, such as stance detection and Twitter bot detection (Section Experiments and Results).

\section{Methodology}
Our \textsc{ValueGraph} consists of two stages: Foundation GNN Pre-training and Value-Guided Pre-training. In the first stage, we pre-train a foundation GNN on the post-reply graph with a masked autoencoding paradigm to obtain noise-resilient representations. In the second stage, we refine these representations using value signals derived from MoralBERT. Figure~\ref{fig:framework} shows an overview of our \textsc{ValueGraph} framework.



\subsection{Stage 1: Foundation GNN Pre-training.}
We first perform unsupervised pre-training of a foundation GNN on the constructed graph dataset. Each post is encoded using ModernBERT~\citep{warner2024smarterbetterfasterlonger}, chosen for robustness to noisy social media text. Reply interactions define graph edges, enabling multi-hop message passing to capture local and global conversational patterns.
Pre-training follows the masked autoencoding paradigm of GraphMAE2~\citep{10.1145/3543507.3583379}. Node features are masked before encoding, and the model reconstructs the original features. This objective encourages noise-resilient representations that serves as a strong initialization for the subsequent value-guided pre-training.


\subsection{Stage 2: Value-Guided Pre-training.}
In this stage, we refine the pre-trained representations with a hierarchical contrastive learning framework. Specifically, we infer a value profile for each user, form contrastive user pairs between users who have similar and dissimilar values, and combine a user-level contrastive objective $\mathcal{L}_{\text{user}}$ with a clustering objective $\mathcal{L}_{\text{cls}}$, which fully exploit the inter-user relations encoded in value similarity and remain robust to the noise in the signal.



\paragraph{Human-Value Grounding for Training Data.}
\label{sec:human_value}

Human values play a central role in shaping social interactions, moral reasoning, and ideological expression. Several frameworks model human values, including Hofstede’s Cultural Dimensions Theory~\citep{hofstede2001culture}, MFT~\citep{haidt2007moral}, and Schwartz’s Theory of Basic Human Values~\citep{schwartz2012refining}. These frameworks have been widely adopted to analyze value-driven language and behavior.  
We adopt MFT as our primary value framework for three reasons. First, it captures moral judgments expressed in everyday language, making it well suited for social media analysis~\citep{johnson2018classification,mooijman2018moralization}. Second, its effectiveness in extracting moral perspectives from text has been empirically validated~\citep{lin2018acquiring,mokhberian2020moral}. Third, it provides structured and interpretable moral dimensions aligned with our objective of modeling value-related signals~\citep{ARAQUE2020105184,zhang2024enhancingstanceclassificationsocial}. More details about MFT are provided in Appendix: Moral Foundations Background. 

To obtain value signals for pre-training, we employ MoralBERT~\citep{trager2022moral}, a fine-tuned language model for capturing moral-values in social discussions\footnote{https://github.com/vjosapreniqi/MoralBERT}. Specifically, we use ten independently fine-tuned classifiers, each corresponding to one of ten moral foundations defined by MFT, i.e., $\mathcal{C}=\{$\textit{care}, \textit{harm}, \textit{fairness}, \textit{cheating}, \textit{loyalty}, \textit{betrayal}, \textit{authority}, \textit{subversion}, \textit{purity}, \textit{degradation}$\}$. 
Given a post $v$, each classifier produces a probability score $\text{MoralBERT}(v,c) \in [0,1]$ for each foundation $c\in\mathcal{C}$. 
To obtain user-level signal, we aggregate scores across posts authored by a user: 
\begin{equation}
   \text{score}(u,c)=\frac{1}{|\phi(u)|}\sum_{v\in\phi(u)}\text{MoralBERT}(v,c).
\end{equation}
The resulting 10-D vector is not treated as gold user values; it serves as a noisy but more robust moral-framing signal, whose usefulness comes from relative differences across users rather than exact post-level predictions.

\paragraph{Contrastive User Pairs Construction.}  
Given inferred value vectors derived from MFT, we sample user pairs $\{(u,\tilde{u}) \mid u, \tilde{u} \in \mathcal{U}~\mathrm{and}~u \neq \tilde{u}\}$ according to a similarity function $\mathrm{sim}(h(u),h(\tilde{u}))$. 
We adopt a symmetric percentile-based strategy, where the $P$-th percentile (e.g., $P=90\%$) defines positive pairs and the $(1-P)\%$-th percentile defines negative pairs:
\begin{equation}
\begin{aligned}
\theta_{+} &= \operatorname{ScoreAt}_{P}\bigl(\{\mathrm{sim}(h(u),h(\tilde{u}))\}\bigr),\\
\theta_{-} &= \operatorname{ScoreAt}_{(1-P)}\bigl(\{\mathrm{sim}(h(u),h(\tilde{u}))\}\bigr).
\end{aligned}
\end{equation}
For each user $u$, we define positive and negative sets as ${S}(u)$ and ${D}(u)$.
Assuming an approximately symmetric similarity distribution (see Appendix: Distribution of Sampled User Similarities), this strategy yields balanced positive and negative sets. If either set is underpopulated for a given user, we iteratively relax the threshold to the median similarity until a minimum set size is reached.

\paragraph{Similarity Computation.} 
User similarity is computed from the Euclidean distance between inferred value vectors:
\begin{equation}
    d(u,\tilde{u})=\|h(u)-h(\tilde{u})\|_2,
\end{equation}
which is converted into a similarity score through a Gaussian radial basis function kernel~\citep{li2021self}:
\begin{equation}
\mathrm{sim}(h(u),h(\tilde{u}))=
    \exp\left({-\frac{d(u,\tilde{u})^2}{2 \sigma^2}}\right),
\end{equation}
where $\sigma$ is set to the median of the sampled distances. 

\paragraph{Loss Functions.}
We treat the value vectors from text as a noisy auxiliary signal that defines relative constraints among users. This places a twofold requirement on our training objective: it should both fully exploit the inter-user relations encoded in value similarity and remain robust to the noise in the signal. Based on this, we design two complementary losses. The full training is illustrated in Appendix: Algorithm.

\textbf{(1) User-level contrastive loss $\mathcal{L}_{\text{user}}$:}
For each seed user $u \in U_{\text{seed}}$, we encourage its embedding to be close to value-similar $\tilde{u} \in S(u)$ and distant from value-dissimilar users $\hat{u} \in D(u)$. For each positive pair $(u, \tilde{u})$, we apply the InfoNCE loss:
\begin{equation}
    \ell_{u,\tilde{u}} = -\log
    \frac{e^{\cos(u,\tilde{u})/\tau}}
    {e^{\cos(u,\tilde{u})/\tau} + \sum_{\hat{u}\in D(u)} e^{\cos(u,\hat{u})/\tau}},
\end{equation}
where $\tau > 0$ is a temperature and $\cos(x,y)$ denotes cosine similarity. The overall contrastive loss is:
\begin{equation}
\label{eq:luser}
    \mathcal{L}_{\text{user}} = \frac{1}{N_{\text{pos}}}
    \sum_{u \in U_{\text{seed}}} \sum_{\tilde{u} \in S(u)} \ell_{u,\tilde{u}},
\end{equation}
where
\begin{equation}
    N_{\text{pos}} = \sum_{u \in U_{\text{seed}}} |S(u)|.
\end{equation}
This loss constructs positive and negative pairs according to the inferred value similarity and aligns user representations at the semantic level, pulling users with similar inferred value-related textual patterns together and pushing users with dissimilar values apart.

\textbf{(2) Clustering loss $\mathcal{L}_{\text{cls}}$:}
To regularize the global structure of the user embedding space, we periodically perform $K$-means clustering on the user embeddings ${z_u}$ every $X$ epochs. Such periodic cluster assignment follows the alternating optimization paradigm commonly adopted in deep clustering methods, where cluster assignments are periodically updated while the encoder progressively refines representations~\citep{Caron_2018_ECCV}. In our framework, clustering serves as a global embedding-space regularizer rather than the primary learning objective. 
Let $c_i$ denote the centroid of cluster $i$ and $\ell_u$ be the cluster assignment of user $u$. Following maximum-margin clustering, we first define a compactness term that encourages each embedding to stay close to its assigned centroid:
\begin{equation}
\label{eq:lcompact}
    \mathcal{L}_{\text{compact}} = \frac{1}{|U^{+}_{\text{seed}}|}
    \sum_{u} \big\lVert z_u - c_{\ell_u} \big\rVert_2^2 .
\end{equation}
We further define a margin term that enforces a minimum distance $m$ between distinct cluster centroids:
\begin{equation}
\label{eq:lmargin}
    \mathcal{L}_{\text{margin}} = \sum_{0 \le p < q \le K-1}
    \big[\max(0,\, m - \lVert c_p - c_q \rVert_2)\big]^2 .
\end{equation}
Combining the two, the clustering loss is defined as:
\begin{equation}
\label{eq:lcls}
    \mathcal{L}_{\text{cls}} = \mathcal{L}_{\text{compact}}
    + \beta\,\frac{2}{K(K-1)}\,\mathcal{L}_{\text{margin}},
\end{equation}
where $\beta$ weights the margin term. For epochs without clustering, $\mathcal{L}_{\text{cls}}$ is set to zero. This loss imposes a global geometric constraint on the user embeddings, with the compactness term encouraging intra-cluster cohesion and the margin term encouraging inter-cluster separation.

\textbf{(3) The overall loss $\mathcal{L}$:}
Each loss on its own covers only half of the training objective. $\mathcal{L}_{\text{user}}$ is a purely pairwise constraint that only specifies which users should be close to which, without constraining the global geometry of the embedding space. $\mathcal{L}_{\text{cls}}$, in contrast, is a self-reinforcing objective that clusters the current embeddings and pulls them toward centroids: it amplifies whatever structure already exists in the representations, without distinguishing whether that structure reflects genuine user differences. Consequently, using either loss in isolation could be unstable.
We therefore design the final objective as a combination of the two:
\begin{equation}
\label{eq:total}
    \mathcal{L} = \mathcal{L}_{\text{user}} + \lambda \mathcal{L}_{\text{cls}},
\end{equation}
where $\lambda$ controls the contribution of the clustering term. The two losses act as mutual regularizers, in which $\mathcal{L}_{\text{user}}$ provides $\mathcal{L}_{\text{cls}}$ with a meaningful clustering axis aligned along value-relevant dimensions, while $\mathcal{L}_{\text{cls}}$ provides $\mathcal{L}_{\text{user}}$ with global stabilization and denoising by aggregating large numbers of users into groups. Representations that simultaneously satisfy local (pairwise) value consistency and global (group) structural consistency are preserved, while noisy structure is filtered out. Appendix: Training Loss and Hyperparameter Tuning gives training loss curves and hyperparameter tuning details.

\section{Theoretical Analysis}

We provide a mechanism-level characterization of how the training objectives shape the embedding space, which formalizes how value-signal contrastive learning pulls users with similar inferred profiles closer, while clustering promotes compact and separated user groups. Full Proofs are provided in Appendix: Full Proof of Theorem 1 and 2.


\begin{theorem}[Value-Signal Similarity Preservation]\label{thm:value-similarity}
Let \( h(u) \in \mathbb{R}^{10} \) denote the inferred value vector of user \(u\), and let \( z_u \in \mathbb{R}^d \) be the user embedding learned by \textsc{ValueGraph}. Suppose \( \mathcal{L}_{\mathrm{user}} \) converges under fixed positive and negative sets constructed from $h(\cdot)$. For sampled user pairs, the learned embedding similarity is encouraged to preserve the ordering induced by value-signal similarity:
\[
\text{sim}\bigl(h(u),h(\tilde{u})\bigr) \uparrow
\quad \Rightarrow \quad
\text{sim}\bigl(z_u,z_{\tilde{u}}\bigr) \uparrow .
\]
\end{theorem}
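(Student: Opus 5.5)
The plan is to make the informal implication precise as a monotone relation between \emph{membership} in the value-defined pair sets and the learned cosine similarity. The ordering enters only through the percentile thresholds $\theta_{+}$ and $\theta_{-}$: pairs with $\mathrm{sim}(h(u),h(\tilde u))\ge\theta_{+}$ are placed in $S(u)$, and pairs with $\mathrm{sim}(h(u),h(\tilde u))\le\theta_{-}$ are placed in $D(u)$. The statement to prove is therefore the following. At a stationary point of $\mathcal{L}_{\mathrm{user}}$ (with fixed $S(\cdot)$ and $D(\cdot)$), one has
\[
\cos(z_u,z_{\tilde u})\;>\;\cos(z_u,z_{\hat u})\quad\text{for every } \tilde u\in S(u),\ \hat u\in D(u),
\]
up to an explicit slack depending on $\tau$ and $|D(u)|$. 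Since $\theta_{+}>\theta_{-}$, this says that higher value similarity implies higher embedding similarity on the sampled pairs.

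The steps are as follows.
\begin{enumerate}[leftmargin=*]
\item Write $s_{+}=\cos(z_u,z_{\tilde u})$ and $s_j=\cos(z_u,z_{\hat u_j})$. Then
\[
\ell_{u,\tilde u}=\log\Bigl(1+\sum_j e^{(s_j-s_{+})/\tau}\Bigr).
\]
This is strictly decreasing in $s_{+}$ and strictly increasing in each $s_j$. The partial derivatives are
\[
\frac{\partial\ell}{\partial s_{+}}=-\frac{1-p_{+}}{\tau},\qquad \frac{\partial\ell}{\partial s_j}=\frac{p_j}{\tau},
\]
where $p_{+}$ and $p_j$ are the softmax weights.
\item Treat the cosines as free variables constrained to $[-1,1]$. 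Then $\mathcal{L}_{\mathrm{user}}$ is a sum of terms in which each positive cosine appears only with a negative gradient and each negative cosine only with a positive gradient. By KKT, any minimizer pushes every $s_{+}$ to its upper feasible value and every $s_j$ down, and gradient flow monotonically increases each $s_{+}$.
\item Quantify the convergence claim. If $\mathcal{L}_{\mathrm{user}}\le\epsilon$, then every $\ell_{u,\tilde u}\le N_{\mathrm{pos}}\epsilon$. This gives
\[
e^{(s_j-s_{+})/\tau}\le e^{N_{\mathrm{pos}}\epsilon}-1,
\]
and hence
\[
s_{+}-s_j\ge -\tau\log\bigl(e^{N_{\mathrm{pos}}\epsilon}-1\bigr)>0
\]
for small $\epsilon$. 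This is exactly the order-preservation statement between the two similarity bands.
\item Extend to graded ordering among positives. Pairs with larger $\mathrm{sim}(h)$ may lie in $S(u)$ for more seed users (the sets are not symmetric under threshold relaxation), so they receive larger aggregate pull. I would phrase this only as ``encouraged,'' matching the theorem's wording.
\end{enumerate}

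The main obstacle is realizability. The cosines are not free variables: they come from $f_\theta$ and must be jointly consistent in $\mathbb{R}^d$. Moreover, the same user may be positive for one seed user and negative for another. My first attempt would be to avoid claiming the global optimum is achievable. Instead I would state the result conditionally on $\mathcal{L}_{\mathrm{user}}\le\epsilon$, using step 3, and separately argue via the gradient signs in step 2 that descent directions in the cosine coordinates agree with the value ordering. If needed, I would add a mild assumption that the pair constraints are simultaneously satisfiable, for instance when $d$ is sufficiently large relative to the number of sampled pairs.
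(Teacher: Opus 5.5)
Your proposal is correct as a conditional statement, and it takes a different route from the paper's proof.

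\textbf{The paper's route.} The paper stays in embedding coordinates. Treating unit-norm embeddings so that cosine becomes a dot product, it computes $\nabla_{z_u}\ell_{u,\tilde u}=\frac{1}{\tau}\bigl((p_{\tilde u}-1)z_{\tilde u}+\sum_{\hat u\in D(u)}p_{\hat u}z_{\hat u}\bigr)$. It reads off that a descent step pulls $z_u$ toward $z_{\tilde u}$ and away from the negatives. It then asserts that when $h(u)$ and $h(\tilde u)$ are close the positive term dominates the softmax, so $\cos(z_u,z_{\tilde u})$ rises with value similarity. Nothing is said about the configuration reached at convergence. This is a one-line descent-direction argument that needs no hypothesis beyond the loss itself and matches the weak ``is encouraged'' wording.

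\textbf{Your route.} You pass to cosine coordinates via $\ell_{u,\tilde u}=\log\bigl(1+\sum_j e^{(s_j-s_+)/\tau}\bigr)$ and prove a sublevel-set certificate of anchor-wise band separation. This gives a checkable end-of-training guarantee. More importantly, it gives an accurate account of what is provable at all. $\mathcal{L}_{\mathrm{user}}$ sees $h$ only through membership in $S(u)$ and $D(u)$, and each term sees the cosines only through the gaps $s_+-s_j$. So the anchor-wise two-band ordering is essentially the whole content, up to the multiplicity effects of your step~4. This also explains why the paper's dominance step cannot be derived: the softmax weights are functions of the $z$'s, not of $h$. Moreover, if $p_{\tilde u}\to 1$, the pull $(1-p_{\tilde u})z_{\tilde u}$ vanishes rather than dominates.

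\textbf{Points to tighten.}

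First, step~3 needs $\mathcal{L}_{\mathrm{user}}\le\epsilon$ with $N_{\mathrm{pos}}\epsilon<\log 2$. The hypothesis ``converges'' does not supply this, so state the result on sublevel sets rather than ``at a stationary point.'' You also do not need to pass through the average: any single term with $\ell_{u,\tilde u}<\log 2$ already forces $s_+>\max_j s_j$.

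Second, the claim in your step~2 that descent raises each $s_+$ holds only for free cosines or unnormalized dot products; the same applies to the paper's conclusion. Take the cosine loss at $z_u=(1,0)$, $z_{\tilde u}=(\cos\alpha,\sin\alpha)$, $z_{\hat u}=(0,1)$ with small $\alpha>0$. The gradient flow of this single term has $\dot s_+<0$, while the gap $s_+-\cos(z_u,z_{\hat u})$ grows. This is further reason to phrase everything through gaps.

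Third, dimension is not the real obstruction to realizability. The Gaussian kernel is positive definite, so $\bigl[\mathrm{sim}(h(u),h(v))\bigr]_{u,v}$ is the Gram matrix of unit vectors in $\mathbb{R}^{|U^{+}_{\mathrm{seed}}|}$. These vectors satisfy your band constraints with margin $\gamma_0=\min_u\bigl(\theta_{+}^{(u)}-\theta_{-}^{(u)}\bigr)$, where $\theta_{\pm}^{(u)}$ are the thresholds actually used for $u$ after any relaxation. They therefore give every $\ell_{u,\tilde u}<\log 2$ as soon as $\gamma_0>\tau\log\max_u|D(u)|$. What genuinely has to be assumed is that the mean-pooled GNN can represent such a configuration and that training can reach it.
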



\begin{theorem}[Cluster Compactness and Separation]\label{thm:cluster-separation}
Let user embeddings $\{z_u\}_{u \in \mathcal{U}}$ be optimized with \( \mathcal{L}_{\mathrm{cls}} \), and let \( \{c_k\}_{k=1}^K \) denote the resulting cluster centroids. The compactness term minimizes \( \| z_u - c_{\ell_u} \|_2 \) for users assigned to cluster \(\ell_u\), while the margin term penalizes centroid pairs with distance below \(m\) and therefore encourages inter-cluster separation.
\end{theorem}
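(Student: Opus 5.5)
The plan is to reduce the statement to the two ingredients of $\mathcal{L}_{\mathrm{cls}}$ in Eq.~\eqref{eq:lcls}, treating the centroids $\{c_k\}$ and assignments $\ell_u$ as fixed between two consecutive $K$-means updates. This matches the alternating optimization described for the clustering loss. Within such an interval, $\mathcal{L}_{\mathrm{cls}}$ is a differentiable function of the embeddings $z_u$ and, through the encoder and the centroid update, of the centroids. It therefore suffices to compute the gradients of $\mathcal{L}_{\mathrm{compact}}$ and $\mathcal{L}_{\mathrm{margin}}$ and show that each descent step moves the geometry in the claimed direction.

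\textbf{Compactness term.} From Eq.~\eqref{eq:lcompact},
\[
\nabla_{z_u}\mathcal{L}_{\mathrm{compact}} = \tfrac{2}{|U^{+}_{\mathrm{seed}}|}\,(z_u - c_{\ell_u}).
\]
A gradient step with step size $\eta$ therefore gives
\[
z_u \leftarrow z_u - \eta'(z_u - c_{\ell_u}),
\]
so
\[
\|z_u - c_{\ell_u}\|_2 \leftarrow (1-\eta')\,\|z_u - c_{\ell_u}\|_2 ,
\]
which strictly decreases whenever $0<\eta'<2$ and $z_u \neq c_{\ell_u}$. Next we invoke the standard $K$-means fact that the centroid $c_k$ is the minimizer of $\sum_{u:\ell_u=k}\|z_u-c\|_2^2$ over $c$. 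Hence the periodic reassignment and centroid update never increase $\mathcal{L}_{\mathrm{compact}}$. Combining the two monotonicity facts gives a nonincreasing sequence bounded below by $0$, which formalizes that the compactness term minimizes $\|z_u-c_{\ell_u}\|_2$.

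\textbf{Margin term.} For a pair $(p,q)$ with $\|c_p-c_q\|_2<m$, Eq.~\eqref{eq:lmargin} contributes
\[
\nabla_{c_p} = -2\,\bigl(m-\|c_p-c_q\|_2\bigr)\,\frac{c_p-c_q}{\|c_p-c_q\|_2}.
\]
Descent thus moves $c_p$ away from $c_q$, and symmetrically moves $c_q$ away from $c_p$. If instead $\|c_p-c_q\|_2\ge m$, the hinge is inactive and the gradient is zero. So the margin term penalizes exactly the pairs closer than $m$ and pushes them apart, with a penalty of $0$ if and only if every centroid pair is separated by at least $m$. Because $c_k$ is the mean of its members' embeddings, the gradient on $c_p$ back-propagates to each member $z_u$ as $\nabla_{c_p}/|\{u:\ell_u=p\}|$. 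This translates the whole cluster away from $c_q$, which establishes the separation claim at the level of embeddings.

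\textbf{Main obstacle.} The difficult step is the interaction between the two terms, together with the discrete reassignment. The compactness step could move $z_u$ so that it becomes closer to another centroid, and the margin push could in principle increase compactness distances. I would handle this by stating the result, in the paper's ``mechanism-level'' sense, as a descent property of each term under fixed assignments. The argument uses the weight $\beta\,\frac{2}{K(K-1)}$ and a sufficiently small step size, so that a first-order expansion shows $\mathcal{L}_{\mathrm{cls}}$ itself decreases. I would also note the degenerate case $c_p=c_q$, where the gradient direction is undefined, and handle it with a subgradient choice.
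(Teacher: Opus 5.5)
Your core argument follows the paper's proof. The paper also argues that $\mathcal{L}_{\mathrm{compact}}$ pulls each $z_u$ toward $c_{\ell_u}$, and it computes the same hinge gradient $-2(m-d)(c_p-c_q)/d$, which vanishes once $d\ge m$. It treats the centroids as free variables and stops there. At that level your proposal is correct. The problems are in what you add on top.

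You begin by freezing $\{c_k\}$ and $\ell_u$ between $K$-means runs, and your contraction and monotonicity arguments rely on frozen centroids. But if the centroids are constants, $\mathcal{L}_{\mathrm{margin}}$ does not depend on any $z_u$ and exerts no force on the embeddings. The back-propagation $\nabla_{c_p}/|\{u:\ell_u=p\}|$ that you use for embedding-level separation is then not available.

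The consistent model freezes the assignments and takes each $c_k$ to be the live mean of its currently assigned embeddings. In that model your compactness gradient is unchanged, because the extra term through the centroid is $-\tfrac{2}{n_k}\sum_{v:\ell_v=k}(z_v-c_k)=0$. A compactness step sends $z_u-c_{\ell_u}$ to $(1-\eta')(z_u-c_{\ell_u})$ without moving the centroid; the contraction factor is really $|1-\eta'|$, which matters for $1<\eta'<2$. The margin step adds a common vector to every member of a cluster, so it translates the cluster rigidly. Under fixed assignments the two terms therefore decouple exactly, and the interaction you call the main obstacle does not occur.

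Your claim that reassignment never increases $\mathcal{L}_{\mathrm{compact}}$ also needs qualifying. It requires the nearest-centroid property of the assignment step, not only the mean-minimizer fact. It also holds only for Lloyd updates warm-started on a fixed user set. The paper's training loop instead re-runs $K$-means on a freshly sampled batch $U^{+}_{\mathrm{seed}}$, interleaves $\mathcal{L}_{\mathrm{user}}$ updates, and sets $\mathcal{L}_{\mathrm{cls}}=0$ between clustering epochs. So no nonincreasing sequence is guaranteed in actual training.

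Neither point affects the theorem as stated, which only asserts what each term penalizes and which way its gradient points.
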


Theorems~\ref{thm:value-similarity} and~\ref{thm:cluster-separation} clarify the optimization behavior: \textsc{ValueGraph} uses inferred value signals to organize user embeddings, while graph pre-training supplies semantic and relational context.

\section{Experiments and Results}\label{experiments}

\subsection{Pre-training Corpus}\label{sect:pre-training-data} 
We construct a large-scale pre-training corpus from social media conversations collected from Reddit and Twitter. 
\textbf{Reddit dataset}\footnote{\url{https://convokit.cornell.edu/documentation/subreddit.html}} provides diverse community discussions across subreddits, where we follow~\citet{trager2022moral} to select communities reflecting diverse moral concerns. 
\textbf{Twitter datasets} include widely used rumor detection benchmarks, including PHEME~\citep{pheme}, Twitter16~\citep{10.5555/3061053.3061153}, BEARD~\citep{zeng-gao-2022-early}, and Twitter-Covid~\citep{twitter_covid}, which contain conversations involving rumors, controversies, and value-related debates. 
These datasets provide rich textual and reply-based interaction signals for learning contextualized user representations. 
After preprocessing, the combined corpus consists of 461,198 conversation graphs with approximately 13.6 million nodes and 39.9 million edges.

\begin{table*}[t!]
\centering
\renewcommand{\arraystretch}{1.1}
\resizebox{\textwidth}{!}{%
\begin{tabular}{l l l c cc ccc ccc c}
\toprule
& & & \multicolumn{1}{c}{\textbf{LLM}} 
      & \multicolumn{2}{c}{\textbf{Graph Pre-training}} 
      & \multicolumn{3}{c}{\textbf{PLM}} 
      & \multicolumn{3}{c}{\textbf{GNN}} 
      & \multicolumn{1}{c}{\textbf{ValueGraph}} \\
\cmidrule(lr){4-4} \cmidrule(lr){5-6} \cmidrule(lr){7-9} \cmidrule(lr){10-12}
\textbf{Model} & \textbf{Dataset} & \textbf{Metric} 
& \textbf{GPT-5.4} 
& \textbf{GraphCL} & \textbf{GraphMAE2} 
& \textbf{SimCSE} & \textbf{ModernBERT} & \textbf{Bertweet} 
& \textbf{GCN} & \textbf{GAT} & \textbf{GTN}  
&  \\
\midrule
\multirow{2}{*}{GLAN} 
& \multirow{2}{*}{MT\_CSD} 
& Acc.    & 0.58 & 0.41& 0.60& 0.60& 0.56& 0.58& 0.40& 0.42& 0.57& \textbf{0.63}$^{*}$ \\
& & MacF1 & 0.56 & 0.38& 0.55& 0.56& 0.53& 0.51& 0.35& 0.33& 0.56& \textbf{0.58}$^{~}$ \\
\midrule
\multirow{2}{*}{BrLSTM} 
& \multirow{2}{*}{RumourEval19} 
& Acc.    & 0.73 & 0.59& 0.67& 0.74& 0.74& 0.65& 0.61& 0.31& 0.71& \textbf{0.77}$^{*}$ \\
& & MacF1 & 0.47 & 0.38& 0.47& 0.53& 0.59& 0.56& 0.25& 0.18& 0.45& \textbf{0.71}$^{*}$ \\
\bottomrule
\end{tabular}%
}
\caption{Stance detection results on MT\_CSD and RumourEval19.  $^{*}$ indicates statistical significance at 99\% confidence level compared with the second-best performance based on two-tailed paired Student's $t$-test.}
\label{results:stance detection}
\end{table*}

\subsection{Stance Detection}

Stance detection classifies users’ attitudes toward a target as \emph{support}, \emph{oppose}, or \emph{neutral}. Because socio-political stances are often associated with value signal and ethical orientation~\citep{alkhatib-2020-personal,durmus-2018-beliefs}, \textsc{ValueGraph} provides value signal that serve as a useful inductive bias for stance inference. 
We evaluate \textsc{ValueGraph} on two benchmarks, MT\_CSD~\citep{niu-etal-2024-challenge} and RumourEval19~\citep{gorrell-etal-2019-semeval}.
We integrate our post-level embeddings into two established stance models. Details are shown in Appendix: Stance Detection. 

\textbf{(1) On MT\_CSD}, we adopt the graph-prompt framework of~\citet{zhao2024graphprompt}, which performs event-aware prompt tuning over conversation graphs. We replace its GNN node embeddings with our post embeddings and use GLAN~\citep{yuan2019jointlyembeddinglocalglobal} for prediction.  

\textbf{(2) On RumourEval19}, we use BrLSTM~\citep{ijcai2022p533}, which models tree-structured reply dependencies, and replace its initial token-level inputs with precomputed post embeddings.  
This plug-in strategy provides contextual and relational representations without modifying downstream architectures or adding supervision. 

\paragraph{Baselines.} 
We compare against four categories of strong baseline encoders.  

\textbf{(1) Graph-based pre-training:} GraphCL~\citep{Hafidi2020GraphCLCS}, which applies contrastive learning over augmented graph views, and GraphMAE2~\citep{10.1145/3543507.3583379}, a masked graph autoencoder. Both are pre-trained on the same datasets using an identical graph encoder and subsequently fine-tuned for stance classification.  

\textbf{(2) Pre-trained language models:} BERTweet~\citep{nguyen-etal-2020-bertweet}, SimCSE~\citep{gao2021simcse}, and ModernBERT~\citep{warner2024smarterbetterfasterlonger}. All these pre-trained language models are fine-tuned end-to-end on the stance detection task.  

\textbf{(3) Inductive GNNs:} GCN~\citep{kipf2016semi}, GTN~\citep{yun2019graph}, and GAT~\citep{velickovic2017graph}, trained from scratch on the downstream task to test whether graph structure alone, without graph pre-training or value-signal guidance, is sufficient. 

\textbf{(4) Text-only LLM:} GPT-5.4, prompted with the target, available user posts, and thread text with no graph serialization.\footnote{GPT-5.4 does not receive explicit graph; structure-aware graph serialization for LLMs is nontrivial and left to future work.}

For fair comparison, trainable models use identical data splits, optimizer configurations, and early-stopping criteria based on validation loss. GPT-5.4 is evaluated on the same test splits with fixed prompts and deterministic decoding. Fixed hyperparameters and prompt templates are provided in Appendix: Hyperparameter Settings and GPT-5.4 Prompt Templates.



\paragraph{Evaluation Metrics.}  
We report accuracy (acc) and macro F1 (macF1). While accuracy provides an overall measure, macF1 is more informative under class imbalance, which is prevalent in stance detection datasets.

\paragraph{Results.}
As shown in Table~\ref{results:stance detection}, \textsc{ValueGraph} consistently outperforms the non-LLM and LLM baselines on both benchmarks; GPT-5.4 performs comparably with the non-LLM baselines. On MT\_CSD, \textsc{ValueGraph} attains 63\% acc and 58\% macF1, yielding relative gains of 5\% in acc and 5.5\% in macF1 over the strongest pre-trained baseline (GraphMAE2, 60\% acc and 55\% macF1). On RumourEval19, \textsc{ValueGraph} achieves 77\% acc and 71\% macF1, corresponding to a relative macF1 improvement of 20.3\% over the strongest PLM baseline (ModernBERT, 59\%) and 57.8\% over the best graph-based baseline (GTN, 45\%).

\subsection{Twitter Bot Detection}

\begin{table*}[t!]
  \centering
  \small
    \begin{tabular}{@{} l l c l l l l l @{}}
      \toprule
      \textbf{Method} & \textbf{Setting} & \textbf{Type} 
      & \textbf{Accuracy} & \textbf{MacF1} & \textbf{Precision} & \textbf{Recall} & \textbf{MCC} \\
      \midrule
      GPT-5.4 & Text-only & T & 56.9$(\pm0.1)$ & 52.1$(\pm0.1)$ & 57.1$(\pm0.1)$ & 54.9$(\pm0.1)$ & 11.8$(\pm0.1)$ \\
      \midrule
      \multirow{4}{*}{RoBERTa} 
      & Baseline & T 
      & 50.7$(\pm0.2)$ 
      & 54.8$(\pm0.5)$ 
      & 48.8$(\pm0.1)$ 
      & 62.5$(\pm1.2)$ 
      & – \\
      & \textsc{ValueGraph} & T 
      & 56.9$(\pm0.6)$$^{*}$$^{*}$  
      & 64.3$(\pm0.4)$$^{*}$$^{*}$  
      & 53.3$(\pm0.6)$$^{*}$$^{*}$   
      & 81.0$(\pm2.3)$$^{*}$  
      & – \\
      & Baseline & UT 
      & 63.1$(\pm0.5)$ 
      & 66.2$(\pm0.5)$ 
      & 58.9$(\pm0.5)$ 
      & 75.6$(\pm1.4)$ 
      & – \\
      & \textsc{ValueGraph} & UT 
      & 65.9$(\pm0.3)$$^{*}$$^{*}$   
      & 68.9$(\pm2.4)$$^{*}$  
      & 61.2$(\pm0.6)$ 
      & 78.7$(\pm1.8)$ $^{*}$ 
      & – \\
      \midrule
      \multirow{4}{*}{\shortstack{BotGCN\\(T5)}}  
      & Baseline & TG 
      & 56.8$(\pm6.0)$ 
      & 64.4$(\pm1.3)$ 
      & 53.7$(\pm5.1)$ 
      & 82.1$(\pm8.1)$ 
      & 17.9$(\pm9.7)$ \\
      & \textsc{ValueGraph} & TG 
      & 70.2$(\pm1.3)$$^{*}$$^{*}$   
      & 67.9$(\pm0.3)$$^{*}$$^{*}$    
      & 69.8$(\pm3.3)$$^{*}$    
      & 66.4$(\pm3.2)$$^{*}$$^{*}$    
      & 40.3$(\pm2.5)$$^{*}$$^{*}$    \\
      & Baseline & FTUG 
      & 70.5$(\pm0.6)$ 
      & 70.4$(\pm0.4)$ 
      & 67.1$(\pm1.2)$ 
      & 74.2$(\pm1.5)$ 
      & 41.3$(\pm1.1)$ \\
      & \textsc{ValueGraph} & FTUG 
      & 71.1$(\pm0.7)$$^{*}$$^{*}$ 
      & 71.1$(\pm0.3)$ 
      & 68.9$(\pm2.7)$$^{*}$$^{*}$ 
      & 72.2$(\pm5.0)$$^{*}$$^{*}$ 
      & 42.5$(\pm1.1)$\\
      \midrule
      \multirow{4}{*}{\shortstack{BotGAT\\(T5)}}  
      & Baseline & TG 
      & 47.4$(\pm0.0)$ 
      & 64.3$(\pm0.0)$
      & 47.5$(\pm0.0)$ 
      & 99.8$(\pm0.2)$ 
      & –0.6$(\pm0.9)$ \\
      & \textsc{ValueGraph} & TG 
      & 73.1$(\pm0.3)$$^{*}$$^{*}$   
      & 66.0$(\pm0.6)$$^{*}$$^{*}$    
      & 82.5$(\pm2.2)$$^{*}$$^{*}$    
      & 55.1$(\pm1.8)$$^{*}$$^{*}$    
      & 47.8$(\pm1.2)$$^{*}$$^{*}$    \\
      & Baseline & FTUG 
      & 73.2$(\pm2.6)$
      & 70.9$(\pm2.0)$
      & 74.6$(\pm8.2)$ 
      & 69.3$(\pm10.0)$ 
      & 47.3$(\pm4.7)$ \\
      & \textsc{ValueGraph} & FTUG 
      & 74.7$(\pm0.8)$$^{*}$$^{*}$ 
      & 73.0$(\pm0.3)$$^{*}$
      & 74.2$(\pm3.0)$ 
      & 72.1$(\pm3.3)$ 
      & 49.3$(\pm1.6)$$^{*}$$^{*}$ \\
      \midrule
      \multirow{4}{*}{\shortstack{BotRGCN\\(T5)}} 
      & Baseline & TG 
      & 69.2$(\pm1.5)$ 
      & 69.1$(\pm0.5)$ 
      & 66.2$(\pm3.4)$ 
      & 72.6$(\pm4.3)$ 
      & 38.9$(\pm2.4)$ \\
      & \textsc{ValueGraph} & TG 
      & 70.3$(\pm1.1)$ 
      & 69.1$(\pm0.5)$$^{*}$$^{*}$    
      & 68.5$(\pm2.9)$$^{*}$  
      & 70.0$(\pm3.5)$$^{*}$$^{*}$    
      & 40.7$(\pm2.0)$ \\
      & Baseline & FTUG 
      & 71.0$(\pm0.8)$ 
      & 73.5$(\pm0.5)$ 
      & 65.0$(\pm1.1)$ 
      & 84.8$(\pm1.7)$ 
      & 44.7$(\pm1.2)$ \\
      & \textsc{ValueGraph} & FTUG 
      & 73.0$(\pm0.8)$$^{*}$
      & 74.2$(\pm0.2)$ 
      & 68.0$(\pm1.7)$$^{*}$  
      & 82.0$(\pm2.2)$$^{*}$  
      & 47.0$(\pm1.1)$ \\
      \bottomrule
    \end{tabular}
    \caption{Comparison of text-only GPT-5.4, baseline embeddings, and \textsc{ValueGraph} embeddings for twitter bot detection. * and ** indicate statistical significance at 95\% and 99\% confidence levels, respectively, based on two-tailed paired Student’s $t$-test between \textsc{ValueGraph} and the corresponding baseline embedding.}
    \label{tab:twitbotmetrics}
\end{table*}

Detecting twitter bots is critical for maintaining the integrity of online discourse. However, most existing detectors rely primarily on surface-level textual features or network structure. We evaluate whether replacing or augmenting such cues with \textsc{ValueGraph} user embeddings guided by inferred value signals can improve bot detection performance across strong baseline models. We use TwiBot-22~\citep{10.5555/3600270.3602825} dataset for evaluation (Dataset details are in Appendix: Twitter Bot Detection).


\paragraph{Baselines.} We select high-performing models from the benchmark study by \citet{10.5555/3600270.3602825}, covering different data modalities.
We include RoBERTa~\citep{liu2019robertarobustlyoptimizedbert} as a text-based baseline, together with BotRGCN~\citep{feng2021botrgcn}, BotGAT~\citep{lei2022bic}, and BotGCN~\citep{feng2021botrgcn}, which use T5 encoder~\citep{T5} to encode text. 
We also add a \textbf{GPT-5.4} baseline using sampled user posts and profile text with no graph serialization. Each non-LLM model is evaluated under configurations using F (user metadata or engineered features), T (posts), U (profile description), and G (network structure). We examine whether substituting T/G-based representations with \textsc{ValueGraph} embeddings improves performance. 

\paragraph{Setup and Metrics.} 
All trainable encoders retain their original architectures and default settings. We perform grid search over hyperparameters (see  Appendix: Hyperparameter Settings), run each experiment five times with different random seeds, and report Accuracy, MacF1, Precision, Recall, and Matthews Correlation Coefficient (MCC).

\paragraph{Results.} 

Table~\ref{tab:twitbotmetrics} shows that \textsc{ValueGraph}-based embeddings consistently outperform non-LLM baseline representations across models. For instance, RoBERTa in the text-only configuration improves macF1 from 54.8\% to 64.3\%, with recall increasing from 62.5\% to 81.0\%. While GPT-5.4 achieves acc generally on par with \textsc{ValueGraph} in the text-only setting, its macF1 is substantially lower. These gains indicate that \textsc{ValueGraph} captures complementary semantic and structural cues for bot-human discrimination.
While the improvements are statistically significant in text+network settings, significance tends to decrease when full multimodal inputs are available, particularly for macF1 in FTUG configurations, where strong profile and graph features already dominate the decision boundary. Nevertheless, because the \textsc{ValueGraph} encoder is not trained with bot labels or downstream fine-tuning, these gains indicate that value-signal guidance adds useful information even in feature-rich settings.


\begin{figure*}[t!]
    \centering

    \begin{subfigure}[t]{0.24\textwidth}
        \centering
        \includegraphics[width=\linewidth]{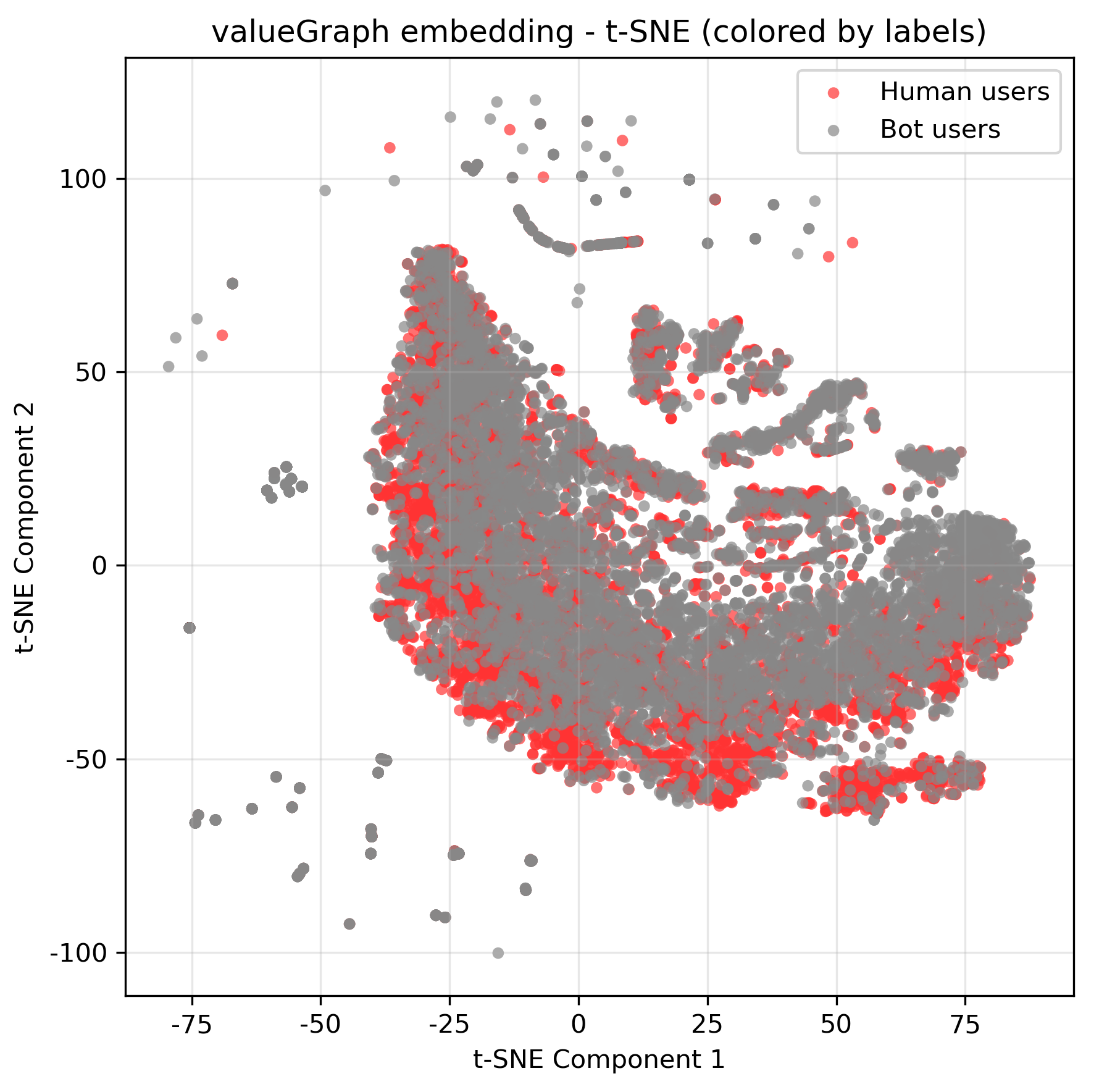}
        \caption{ValueGraph Embeddings}
        \label{fig:tsne-a}
    \end{subfigure}\hfill
    \begin{subfigure}[t]{0.23\textwidth}
        \centering
        \includegraphics[width=\linewidth]{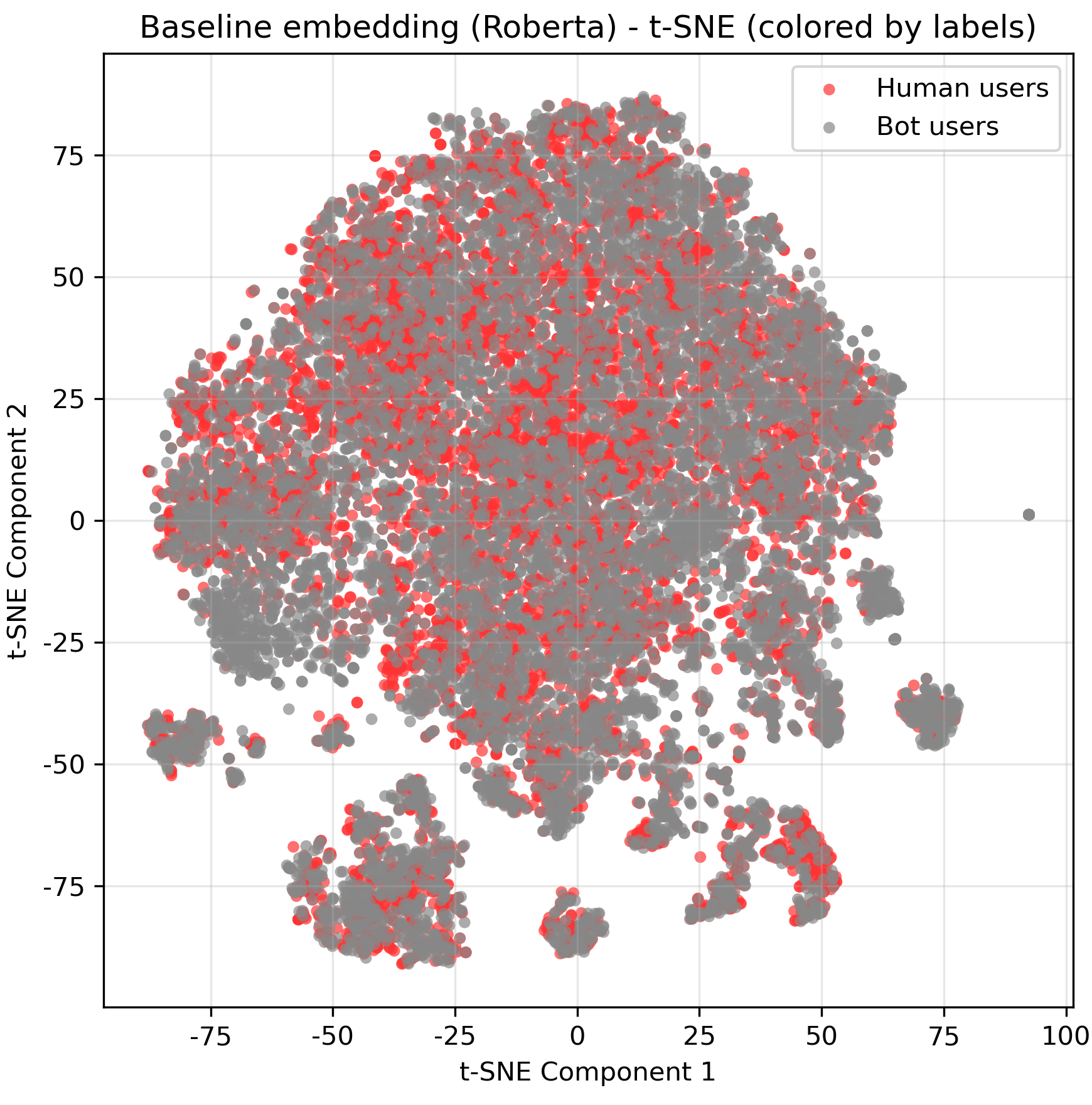}
        \caption{RoBERTa Embeddings}
        \label{fig:tsne-b}
    \end{subfigure}\hfill
    \begin{subfigure}[t]{0.23\textwidth}
        \centering
        \includegraphics[width=\linewidth]{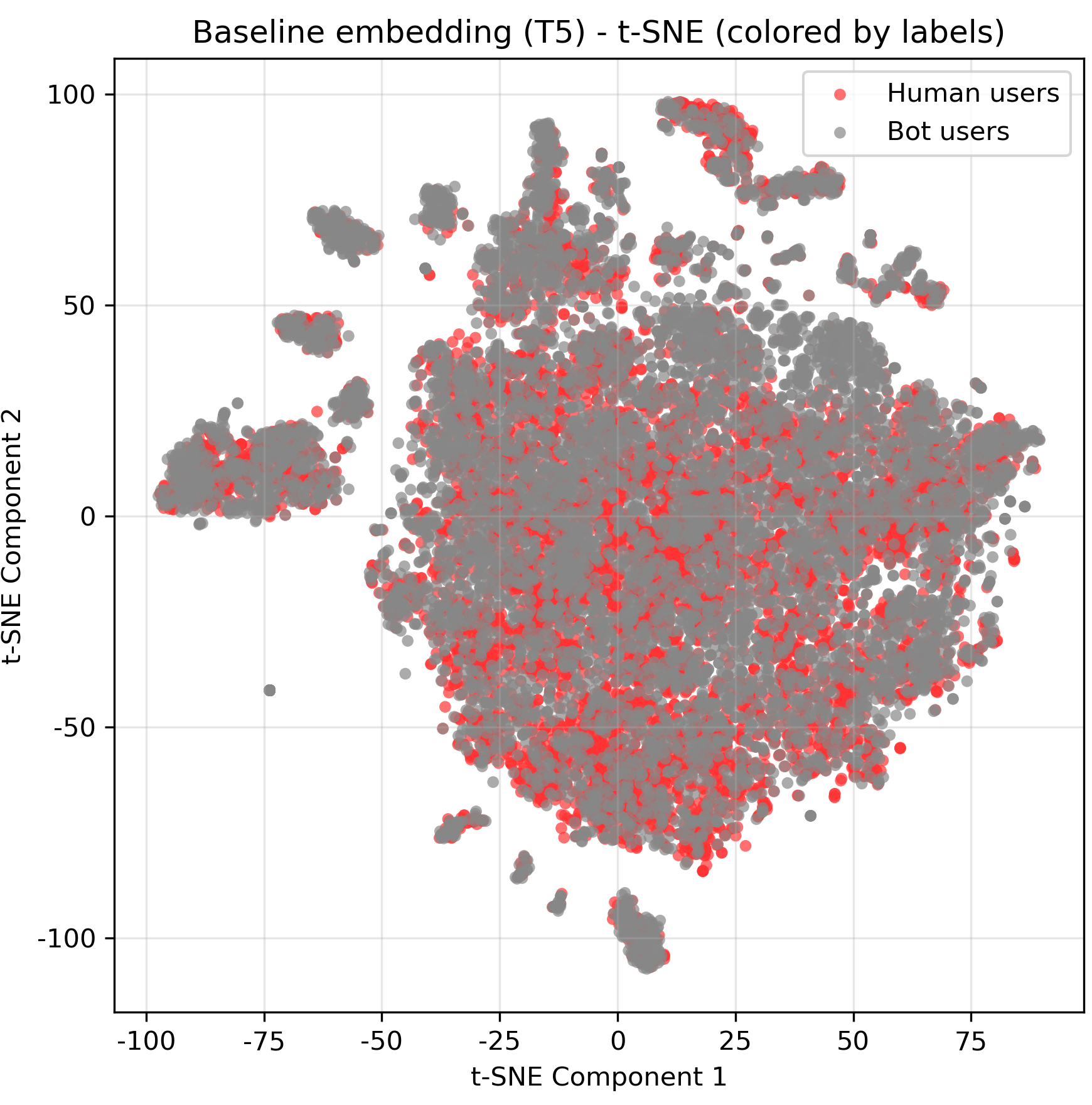}
        \caption{T5 Embeddings}
        \label{fig:tsne-c}
    \end{subfigure}\hfill
    \begin{subfigure}[t]{0.23\textwidth}
        \centering
        \includegraphics[width=\linewidth]{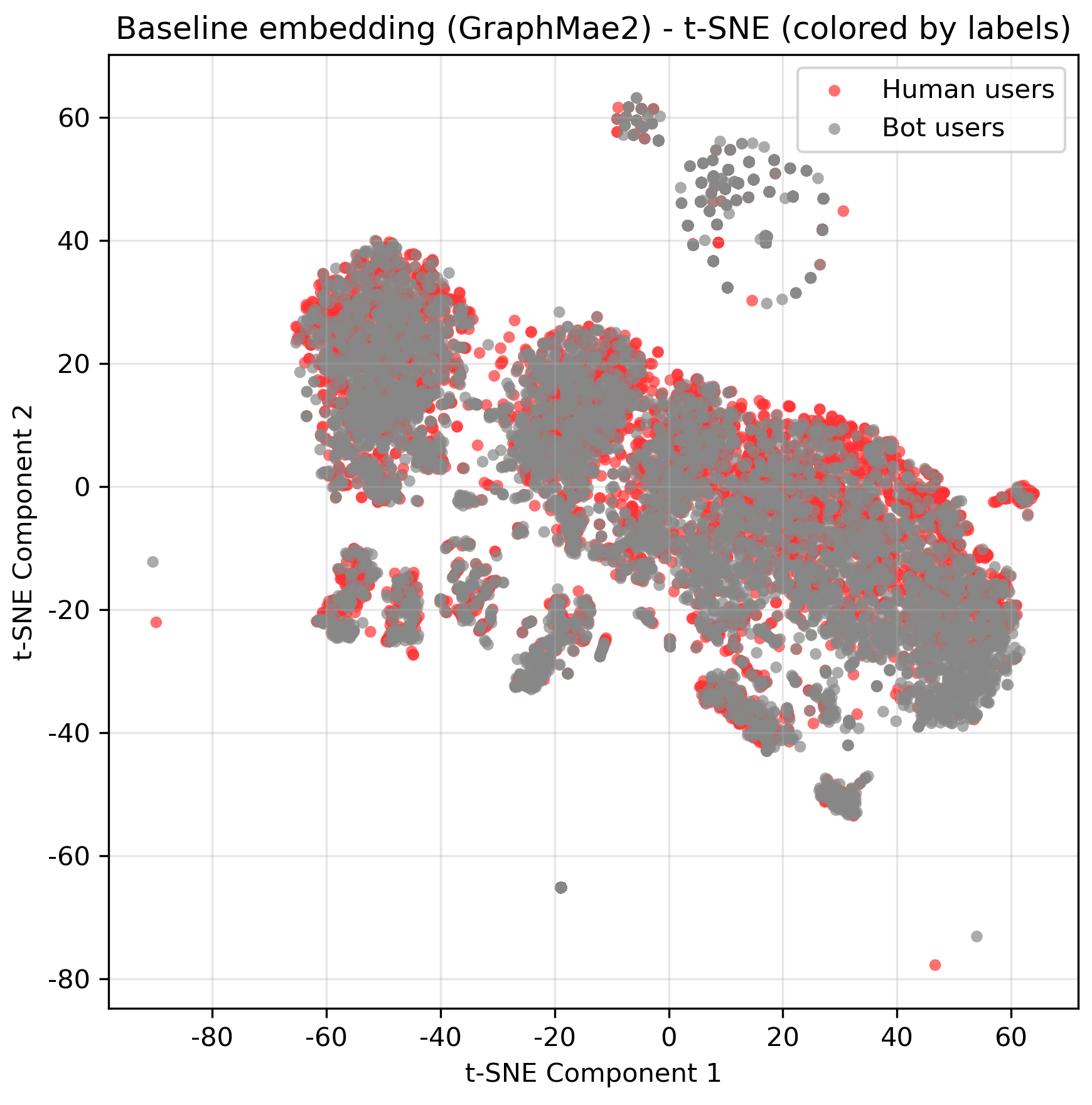}
        \caption{GraphMAE2 Embeddings}
        \label{fig:tsne-d}
    \end{subfigure}

    \caption{t-SNE visualization of user embeddings for Twitter bot detection. Red and grey dots represent ground-truth human and bot users, respectively.}
    \label{fig:tsne_vis}
\end{figure*}

\subsection{Ablation Study}

\begin{table}[t!]
\small
\centering
\resizebox{0.48\textwidth}{!}{%
\begin{tabular}{l l l c c}
\toprule
\textbf{Task} & \textbf{Model/data} & \textbf{Configuration} &
\textbf{Accuracy} & \textbf{MacF1} \\
\midrule

\multirow{8}{*}{Stance}
  & \multirow{4}{*}{\shortstack{GLAN\\(MT\_CSD)}}
    & Only Stage 1              & 41.2 & 40.7 \\
  & & Stage 1 + $\mathcal{L}_{\text{cls}}$  & 50.4 & 34.3 \\
  & & Stage 1 + $\mathcal{L}_{\text{user}}$ & 45.3 & 32.6 \\
  & & \textbf{Full}             & \textbf{63.0} & \textbf{58.0} \\

\cdashline{2-5}

  & \multirow{4}{*}{\shortstack{BrLSTM\\(RumourEval19)}}
    & Only Stage 1              & 53.7 & 55.8 \\
  & & Stage 1 + $\mathcal{L}_{\text{cls}}$  & 62.7 & 52.8 \\
  & & Stage 1 + $\mathcal{L}_{\text{user}}$ & 53.9 & 45.7 \\
  & & \textbf{Full}             & \textbf{77.1} & \textbf{71.2} \\

\midrule



\multirow{4}{*}{Bot}
  & \multirow{4}{*}{\shortstack{BotGAT\\(TwiBot-22)}}
    & Only Stage 1              & 73.1 & 70.3 \\
  & & Stage 1 + $\mathcal{L}_{\text{cls}}$  & 72.9 & 72.6 \\
  & & Stage 1 + $\mathcal{L}_{\text{user}}$ & 72.6 & 72.0 \\
  & & \textbf{Full}             & \textbf{74.7} & \textbf{73.0} \\
\bottomrule
\end{tabular}
}
\caption{Ablation results of \textsc{ValueGraph} embeddings under different training settings across stance detection and Twitter bot detection tasks. 
For Twitter bot detection, BotGAT is evaluated in the FTUG setting.}
\label{tab:ablation}
\end{table}
To assess the contribution of value-signal information, we conduct controlled
ablation experiments. Since Stage~1 pre-training is essential for capturing reply
structure and producing stable representations, it is retained in all settings.
We evaluate four variants: (1) Stage~1 pre-training only; (2) Stage~1 $+$
$\mathcal{L}_{\text{cls}}$; (3) Stage~1 $+$ $\mathcal{L}_{\text{user}}$; and
(4) the full model.

Table~\ref{tab:ablation} shows that neither $\mathcal{L}_{\text{cls}}$ nor $\mathcal{L}_{\text{user}}$ alone yields consistent improvements, which supports our conjecture in the Loss Function section. Notably, a single loss can raise accuracy while lowering macF1. For example, on GLAN (MT\_CSD), adding $\mathcal{L}_{\text{cls}}$ improves accuracy from $41.2\%$ to
$50.4\%$ but drops macF1 from $40.7\%$ to $34.3\%$. 
In contrast, the full \textsc{ValueGraph} model, which jointly integrates
structural pre-training, clustering supervision, and value-signal guidance,
achieves consistent improvements across all tasks. These results validate the
complete \textsc{ValueGraph} design (see Appendix: Supplementary Results of the Ablation Experiments for full results).


\subsection{User Clustering Analysis}
We use t-SNE (see Appendix: t-SNE Setting for parameter setup) to visualize how different embedding models organize users for Twitter bot detection. As shown in Figure~\ref{fig:tsne_vis}, \textsc{ValueGraph} yields clearer bot-human separation than the compared text and graph encoders. This visualization provides dataset-specific qualitative evidence of improved representation separability.

A content-level comparison helps explain this separation. In the evaluated dataset, some bot clusters contain repetitive and polarized value signal. For example, the bot account (author\_id anonymized) posted an accusatory narrative: ``\textit{Paul Vickers died suddenly 3 months after I exposed his \#Establishment lies … contributed to his death … \#FactCheck …}'', relying on blame attribution, repeated hashtags, and a one-sided moral stance. In contrast, a human account (author\_id anonymized) wrote: ``\textit{Is that what the point is? Being angry with the one we love is not the same as unloving them, is it?}'', showing more dialogic and context-sensitive expression. These examples suggest that \textsc{ValueGraph} captures value signals useful for bot detection.

\section{Conclusion}
We presented \textsc{ValueGraph}, a value-signal guided graph pre-training framework for contextualized user representation. Grounded in MFT, \textsc{ValueGraph} combines semantic and structural graph pre-training, contrastive learning over inferred value similarity, and clustering to capture textual, relational, and value-relevant behavioral cues. Experiments on stance detection and Twitter bot detection show consistent gains over strong baselines. These results suggest that noisy but theory-informed value signals provide useful auxiliary guidance for socially informed user modeling without predicting users' true values.

\section*{Ethics Statement}

This work uses publicly available social-media datasets under their original access conditions and licenses. We do not infer users' true psychological values; MoralBERT-derived vectors are used only as noisy aggregate signals for representation learning. Results should be interpreted as dataset-specific and should not be used for individual-level profiling or decision-making without appropriate safeguards.
We will release code and processing scripts but will not redistribute raw social-media content or personally identifiable information. The released artifacts are intended for research purposes only.

\section*{Acknowledgment}
This research is supported by the SMU-A*STAR Joint Lab in Social and Human-Centered Computing (SMU grant no.: SAJL-2022-CSS02, SAJL-2022-CSS003). This research is supported by A*STAR (C232918004, C232918005).

\bibliography{aaai2027}


\clearpage
\input{appendix.tex}

\end{document}

%% file: appendix.tex
\appendix

\section*{Appendix}\label{appendix}

\section{Full Proof of Theorem 1}\label{app:proof-value-similarity}

\begin{proof}
The gradient of the InfoNCE loss with respect to the embedding $z_u$ takes the form
\begin{equation}
\nabla_{z_u} \ell_{u,\tilde{u}} \propto \frac{1}{\tau}\Bigl(z_{\tilde{u}}-\sum_{\hat{u}\in \mathcal{D}(u)} p_{\hat{u}} z_{\hat{u}}\Bigr),
\end{equation}
where, for simplicity, all embeddings are assumed to be normalized so that cosine similarity can be treated as a dot product. Define
\begin{equation}
\begin{aligned}
A &= e^{\cos(z_u,z_{\tilde{u}})/\tau}, \\
B &= e^{\cos(z_u,z_{\tilde{u}})/\tau} + \sum_{\hat{u}\in \mathcal{D}(u)} e^{\cos(z_u,z_{\hat{u}})/\tau}.
\end{aligned}
\end{equation}
Then the loss can be written as
\begin{equation}
\begin{aligned}
\ell_{u,\tilde{u}} &= -\log \frac{A}{B} = -\log A + \log B \\
&= -\frac{\cos(z_u,z_{\tilde{u}})}{\tau} + \log B.
\end{aligned}
\end{equation}
Differentiating the first term gives
\begin{equation}
\nabla_{z_u}\left[-\frac{\cos(z_u,z_{\tilde{u}})}{\tau} \right] = -\frac{1}{\tau} z_{\tilde{u}}.
\end{equation}
Differentiating the second term gives
\begin{equation}
\nabla_{z_u}B = \frac{1}{\tau}e^{\cos(z_u,z_{\tilde{u}})/\tau}z_{\tilde{u}} + \sum_{\hat{u}\in\mathcal{D}(u)}\frac{1}{\tau}e^{\cos(z_u,z_{\hat{u}})/\tau}z_{\hat{u}}.
\end{equation}
Define the softmax probabilities as
\begin{equation}
\begin{aligned}
p_{\tilde{u}} &= \frac{e^{\cos(z_u,z_{\tilde{u}})/\tau}}{B}, \\
p_{\hat{u}} &= \frac{e^{\cos(z_u,z_{\hat{u}})/\tau}}{B}, \quad \hat{u} \in \mathcal{D}(u).
\end{aligned}
\end{equation}
The gradient of the second term is therefore
\begin{equation}
\nabla_{z_u}\log B = \frac{1}{\tau}\Bigl( p_{\tilde{u}}z_{\tilde{u}} + \sum_{\hat{u}\in\mathcal{D}(u)} p_{\hat{u}}z_{\hat{u}}\Bigr),
\end{equation}
and the full gradient becomes
\begin{equation}
\nabla_{z_u}\ell_{u,\tilde{u}} = \frac{1}{\tau}\Bigl((p_{\tilde{u}}-1)z_{\tilde{u}} + \sum_{\hat{u}\in\mathcal{D}(u)} p_{\hat{u}}z_{\hat{u}}\Bigr).
\end{equation}

By construction, positive pairs have high inferred value-signal similarity, while negative pairs have low inferred value-signal similarity. The InfoNCE objective therefore pulls $z_u$ toward embeddings of value-similar users and pushes it away from value-dissimilar users. Under gradient descent, the update for $z_u$ is
\begin{equation}
z_u^{(t+1)} = z_u^{(t)} + \frac{\eta}{\tau}\Bigl(z_{\tilde{u}} - \sum_{\hat{u}\in\mathcal{D}(u)}p_{\hat{u}}z_{\hat{u}}\Bigr).
\end{equation}
When $h(u)$ and $h(\tilde{u})$ are close, the positive pair dominates the softmax, so the update primarily pulls $z_u$ toward $z_{\tilde{u}}$. This yields
\begin{equation}
\cos\bigl(z_u,z_{\tilde{u}}\bigr) \uparrow \quad \text{when} \quad \cos\bigl(h(u),h(\tilde{u})\bigr) \uparrow.
\end{equation}
Thus, the learned embedding similarity is encouraged to preserve the ordering induced by inferred value-signal similarity.
\end{proof}

\section{Full Proof of Theorem 2}\label{app:proof-cluster-separation}

\begin{proof}
The compactness loss
\begin{equation}
\mathcal{L}_{\text{compact}} = \frac{1}{|\mathcal{U}|} \sum_{u\in \mathcal{U}} \| z_u - c_{\ell_u}\|_2^2
\end{equation}
minimizes the squared Euclidean distance between each user embedding $z_u$ and its assigned centroid $c_{\ell_u}$. If $z_u$ deviates from $c_{\ell_u}$, the loss increases; optimization therefore pulls embeddings toward their assigned centroids and encourages intra-cluster compactness.

The margin loss is
\begin{equation}
\mathcal{L}_{\text{margin}} = \sum_{p<q}\left[ \max\left(0, m - \| c_p - c_q \|_2 \right) \right]^2.
\end{equation}
For any pair of distinct centroids $c_p$ and $c_q$, if $\| c_p - c_q \|_2 \ge m$, the corresponding loss term is zero and no further separation force is applied. If $\| c_p - c_q \|_2 < m$, the pair contributes
\begin{equation}
\ell_{pq} = \left( m - \| c_p - c_q \|_2 \right)^2.
\end{equation}
Let $d=\|c_p-c_q\|_2$. The gradient with respect to $c_p$ is
\begin{equation}
\nabla_{c_p} \ell_{pq} = -2\left( m - d\right) \frac{c_p - c_q}{d},
\end{equation}
with a symmetric expression for $c_q$. Thus, when $d<m$, the gradient pushes the centroids apart until their distance reaches the margin.

The total loss is
\begin{equation}
\mathcal{L} = \mathcal{L}_{\text{user}} + \lambda \mathcal{L}_{\text{cls}},
\end{equation}
where $\mathcal{L}_{\text{cls}}$ includes both compactness and margin terms. Therefore, minimizing $\mathcal{L}_{\text{cls}}$ jointly encourages embeddings within the same cluster to remain close while pushing different cluster centroids apart. This establishes the intended compactness and separation properties.
\end{proof}

\section{Moral Foundations Background}\label{appendix:mft}

\textbf{Moral Foundations Theory (MFT)}: To assess individuals' moral foundations, \citet{haidt_all} developed survey-based questions using factor analysis. We summarize the core moral foundations below:

\begin{itemize}
  \item \textbf{Care/Harm}: This foundation reflects the tendency to form emotional bonds and experience distress at others' suffering. It emphasizes kindness, compassion, and nurturance, while discouraging harm.
  \item \textbf{Fairness/Cheating}: Rooted in reciprocal altruism, this foundation promotes justice, equity, proportionality, and autonomy.
  \item \textbf{Loyalty/Betrayal}: This foundation reflects loyalty to one's in-group or community. It supports patriotism and self-sacrifice, but in extreme cases can lead to nepotism or favoritism.
  \item \textbf{Authority/Subversion}: This foundation reflects respect for leadership, hierarchy, tradition, and established social structures.
  \item \textbf{Sanctity/Degradation}: This foundation reflects the inclination to uphold purity and avoid contamination. It is often tied to religious and cultural ideals of moral elevation.
\end{itemize}

\section{Distribution of Sampled User Similarities}\label{app:distribution}
\begin{figure}
    \centering
    \includegraphics[width=0.9\linewidth]{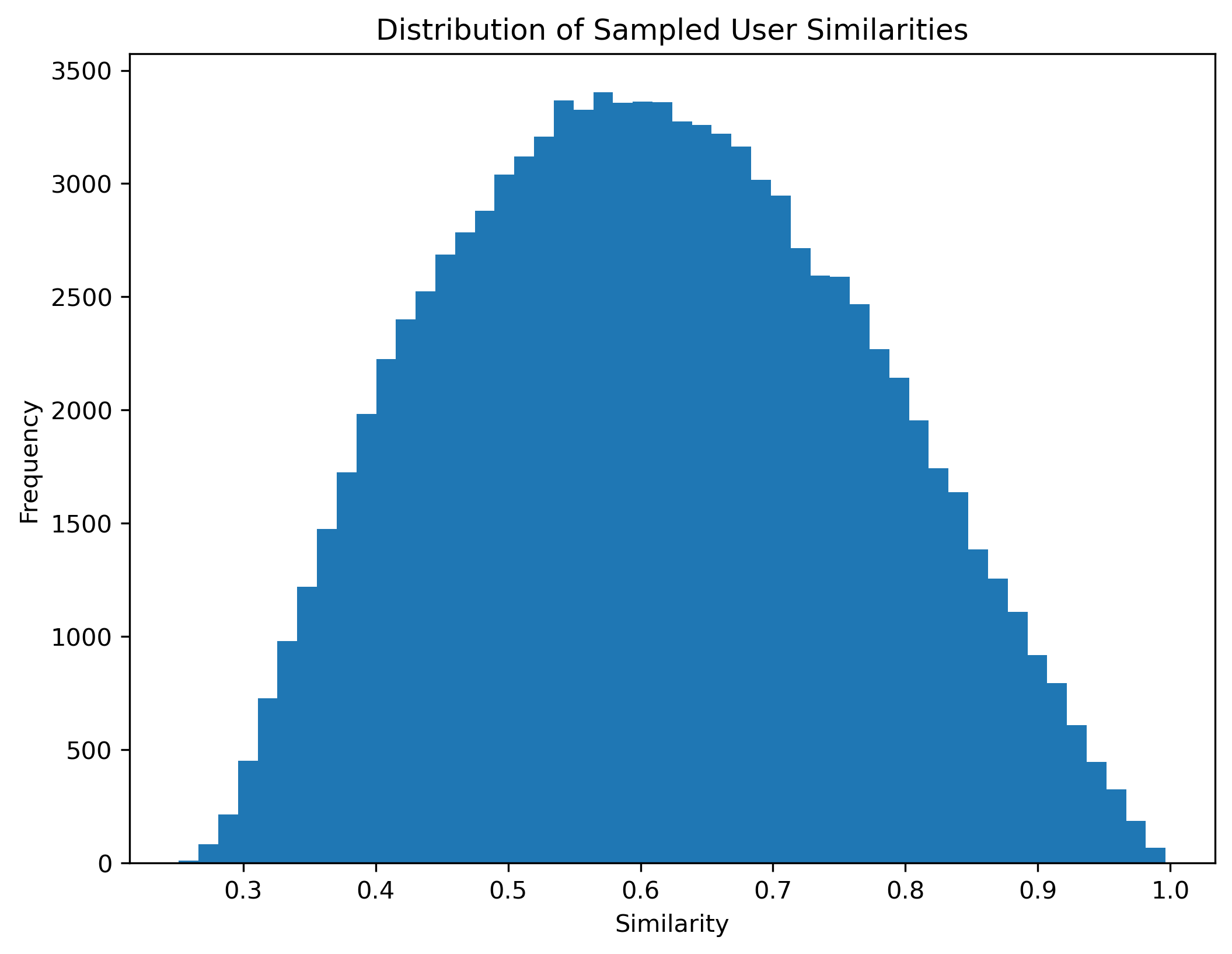}
    \caption{Distribution of RBF-based similarities for 100,000 randomly sampled user pairs.}
    \label{fig:Distribution}
\end{figure}

As shown in Figure~\ref{fig:Distribution}, similarity scores are concentrated at lower values, with a long tail toward higher similarities. This indicates that most user pairs in the inferred value-profile space have weak affinity, while a small subset shows strong alignment.

Based on this distribution, we use percentile-based sampling, selecting positive pairs from the 99.5-th percentile and negative pairs from the 0.5-th percentile. This strategy yields discriminative contrastive examples, reduces ambiguity from moderately similar pairs, and preserves enough samples for stable training.

\section{Training Loss and Hyperparameter Tuning}\label{ValueGraph training part}

To optimize \textsc{ValueGraph}, we conduct a grid search over key hyperparameters in the loss formulation. The objective combines the user-level contrastive loss $\mathcal{L}_{\mathrm{user}}$ and the clustering loss $\mathcal{L}_{\mathrm{cls}}$. Final hyperparameters are selected based on convergence behavior and downstream validation performance.
The tuned hyperparameters are:
\begin{itemize}
\item \textbf{Temperature $\tau$}: Controls the sharpness of similarity distributions in the InfoNCE loss. Smaller $\tau$ emphasizes hard negatives, while larger $\tau$ produces smoother gradients and more stable optimization.

\item \textbf{Margin coefficient $\beta$}: Scales the margin term $\mathcal{L}_{\text{margin}}$ in the clustering loss. Larger $\beta$ enforces stronger inter-cluster separation, while smaller $\beta$ allows more flexible cluster boundaries.

\item \textbf{Number of augmented users $k$}: The number of users sampled per seed user during contrastive training. Larger $k$ increases training diversity but also computational cost.

\item \textbf{Number of clusters $K$}: Controls the granularity of user grouping in the embedding space.

\item \textbf{Margin $m$}: Specifies the minimum distance encouraged between cluster centroids.
\end{itemize}

We report convergence plots for representative configurations in Figure~\ref{fig:valuegraph_loss}. The final configuration is: temperature $\tau=0.07$, margin coefficient $\beta=0.5$, augmented users $k=10$, clusters $K=5$, clustering weight $\lambda=0.05$, and margin $m=1$.

\begin{figure*}[htb!]
    \centering

    \begin{subfigure}{0.32\linewidth}
        \centering
        \includegraphics[width=\linewidth]{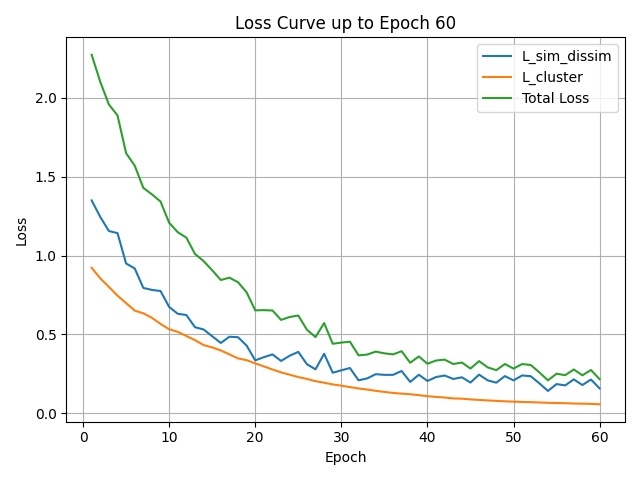}
        \caption{$K=20$, $k=10$}
        \label{fig:loss_1}
    \end{subfigure}
    \hfill
    \begin{subfigure}{0.32\linewidth}
        \centering
        \includegraphics[width=\linewidth]{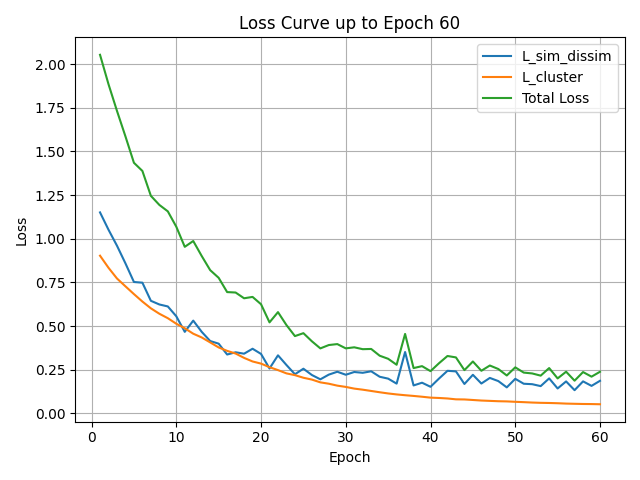}
        \caption{$K=10$, $k=10$}
        \label{fig:loss_2}
    \end{subfigure}
    \hfill
    \begin{subfigure}{0.32\linewidth}
        \centering
        \includegraphics[width=\linewidth]{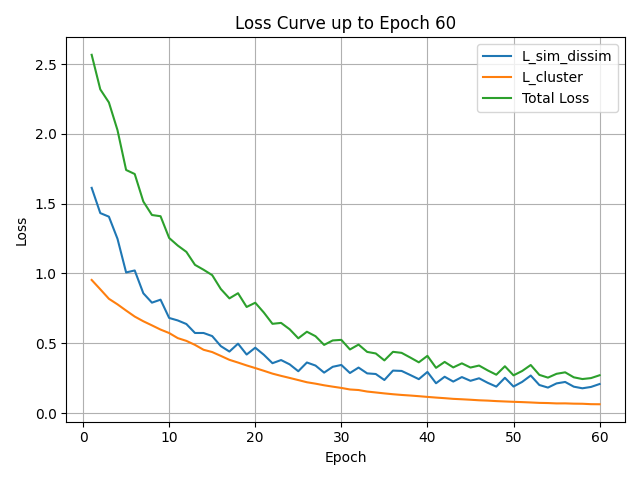}
        \caption{$K=5$, $k=20$}
        \label{fig:loss_3}
    \end{subfigure}

    \caption{Hyperparameter configurations used in \textsc{ValueGraph} training. All settings use temperature $\tau = 0.07$, margin coefficient $\beta = 0.5$, margin $m = 1$, and clustering loss weight $\lambda_{\mathrm{cluster}} = 0.05$. The three configurations vary in the number of augmented users $k$ and the number of clusters $K$.}
    \label{fig:valuegraph_loss}
\end{figure*}

\section{Evaluation Tasks and Datasets}\label{appendix:Downstream Tasks and Evaluation Datasets}

\subsection{Details of Pre-training Datasets}

We provide additional statistics of the pre-training datasets, including source domains, topic distributions, and graph statistics.

\paragraph{Reddit Corpus.}
Our Reddit corpus is constructed from the Reddit Corpus provided by ConvoKit\footnote{\url{https://convokit.cornell.edu/documentation/subreddit.html}}, following the dataset construction procedure of~\citet{trager2022moral}. It contains conversations collected from multiple subreddits representing diverse social communities and discussion topics. The selected subreddits cover political discussions, interpersonal relationships, social issues, and general-interest communities. We retain conversation threads with at least 50 comments to ensure sufficient interaction structures.

After preprocessing, including removing inaccessible comments, filtering isolated nodes, and preserving reply-based interactions, the Reddit corpus contains approximately 9.6 million comments. The detailed distribution across subreddits is shown in Table~\ref{tab:reddit_statistics}.

\begin{table}[h]
\centering
\begin{tabular}{l|r}
\hline
Subreddit & Comments \\
\hline
Politics & 5,626,191 \\
Neoliberal & 2,315,326 \\
Relationship Advice & 861,704 \\
Confession & 477,353 \\
Conservative & 165,182 \\
Worldnews & 100,194 \\
AmItheAsshole & 83,624 \\
Geopolitics & 41,700 \\
Antiwork & 452 \\
Nostalgia & 378 \\
\hline
Total & 9,672,104 \\
\hline
\end{tabular}
\caption{Statistics of the Reddit pre-training corpus.}
\label{tab:reddit_statistics}
\end{table}

\paragraph{Twitter Corpus.}
Our Twitter corpus combines several publicly available social media conversation datasets, including PHEME~\citep{pheme}, Twitter16~\citep{10.5555/3061053.3061153}, BEARD~\citep{zeng-gao-2022-early}, and Twitter-Covid~\citep{twitter_covid}. These datasets cover diverse scenarios such as rumor propagation, public health discussions, and event-driven social interactions.

During preprocessing, we remove isolated nodes without parent or child interactions and preserve only users participating in conversational structures. The final Twitter corpus contains over 5 million posts. The detailed statistics of each source dataset are summarized in Table~\ref{tab:twitter_statistics}.

\begin{table}[h]
\centering

\begin{tabular}{l|r}
\hline
Dataset & Posts \\
\hline
BEARD~\citep{zeng-gao-2022-early} & 3,393,578 \\
Twitter-Covid~\citep{twitter_covid} & 474,710 \\
PHEME~\citep{pheme} & 59,387 \\
Twitter16~\citep{10.5555/3061053.3061153} & 1,101,985 \\
\hline
Total & 5,029,660 \\
\hline
\end{tabular}
\caption{Statistics of the Twitter pre-training corpus.}
\label{tab:twitter_statistics}
\end{table}

\subsection{Stance Detection}\label{appendix:Stance exp-settings}
\textbf{MT\_CSD}~\citep{niu-etal-2024-challenge} is a benchmark for Multi-Target Conversational Stance Detection, constructed from authentic Reddit interactions. It contains 15,876 human-annotated instances over extended conversation threads, with approximately 76\% involving more than three reply turns. The dataset includes Reddit posts, popularity metrics, and discussions centered on targets such as Tesla, SpaceX, Donald Trump, Joe Biden, and Bitcoin.

\textbf{RumourEval19}~\citep{gorrell-etal-2019-semeval} Task A is a Twitter benchmark for rumor stance classification in conversational threads. It contains 15,874 tweets and provides tweet IDs and reply-to IDs, enabling reconstruction of tree- or graph-structured conversations.

Dataset statistics are shown in Tables~\ref{tab:dataset_stats} and~\ref{tab:train_dev_stats}.

\begin{table}[hbp!]
\centering

\begin{tabular}{llr}
\toprule
\textbf{Split} & \textbf{Label} & \textbf{Count} \\
\midrule
Train & Against & 3,349 \\
      & Favor   & 2,383 \\
      & None    & 5,441 \\
\midrule
Valid & Against & 710  \\
      & Favor   & 481  \\
      & None    & 1,115 \\
\midrule
Test  & Against & 728  \\
      & Favor   & 470  \\
      & None    & 1,197 \\
\bottomrule
\end{tabular}
\caption{Statistics for  MT\_CSD dataset}
\label{tab:dataset_stats}
\end{table}

\begin{table}[hbp!]
\centering

\begin{tabular}{llr}
\toprule
\textbf{Split} & \textbf{Label} & \textbf{Count} \\
\midrule
Train & Comment & 2,734 \\
      & Deny    & 333  \\
      & Query   & 330  \\
      & Support & 841  \\
\midrule
Dev   & Comment & 173  \\
      & Deny    & 11   \\
      & Query   & 28   \\
      & Support & 69   \\
\bottomrule
\end{tabular}
\caption{Statistics for RumourEval19}
\label{tab:train_dev_stats}
\end{table}

\subsection{Twitter Bot Detection}\label{appendix:TwitterBotexp-settings}

\textbf{TwiBot-22}~\citep{10.5555/3600270.3602825} is a graph-based bot detection benchmark and one of the largest annotated Twitter account datasets. It models a heterogeneous graph with user, tweet, and hashtag nodes connected by multiple edge types. We select representative baselines from TwiBot-22 and categorize them by modality: F (user metadata), T (tweet and description content), and G (Twitter network structure).

For \textbf{BotRGCN}~\citep{feng2021botrgcn}, \textbf{BotGAT}~\citep{lei2022bic}, and \textbf{BotGCN}~\citep{feng2021botrgcn}, we concatenate our 512-dimensional GNN-generated user embeddings with the original user features, including numerical properties (\texttt{num\_prop}5), categorical properties (\texttt{cat\_prop}3), tweets (\texttt{tweet}), and user descriptions (\texttt{des}).

\section{Additional Experimental Details}

\subsection{Algorithm}
Algorithm~\ref{alg:algo} summarizes the training procedure of \textsc{ValueGraph}. 
At each epoch, we first sample seed users and construct an augmented user batch by retrieving similar and dissimilar users according to their inferred value profiles. 
The corresponding posts are encoded by the graph encoder, and user embeddings are obtained by aggregating post representations. 
The value-guided contrastive loss $\mathcal{L}_{\mathrm{user}}$ is then computed to align users with similar value profiles while separating dissimilar ones. 
To further regularize the global structure of the embedding space, we periodically perform $K$-means clustering every $X$ epochs and compute the clustering loss $\mathcal{L}_{\mathrm{cls}}$. 
The final objective combines both losses to update the encoder parameters.

\begin{algorithm}[t!]
\small
\caption{\textsc{ValueGraph} Training}
\label{alg:algo}
\begin{algorithmic}[1]
\Require Graph $\mathcal{G}=(\mathcal{U},\mathcal{V},\mathcal{E})$; foundation encoder $f$; similar-user sets $\mathcal{S}$; dissimilar-user sets $\mathcal{D}$; augmented users per seed $k$; temperature $\tau$; loss weight $\lambda$; number of clusters $K$; epochs $E$; clustering interval $X$
\Ensure Trained encoder $f$

\For{$e=1$ \textbf{to} $E$}
    \State Sample seed users $U_{\mathrm{seed}}\subset\mathcal{U}$
    \State Construct augmented user batch
    \[
    U^{+}_{\mathrm{seed}}
    \gets
    \bigcup_{u\in U_{\mathrm{seed}}}
    \left(
    \tilde{u}\overset{k}{\sim}\mathcal{S}(u)
    \cup
    \hat{u}\overset{k}{\sim}\mathcal{D}(u)
    \right)
    \]
    \State Collect posts $V \gets \bigcup_{u\in U^{+}_{\mathrm{seed}}}\phi(u)$
    \State Encode posts $z_v \gets f(v),\ \forall v\in V$
    \State Compute user embeddings
    \[
    z_u \gets \mathrm{MEAN}\bigl(\{z_v \mid v\in\phi(u)\}\bigr),
    \quad \forall u\in U^{+}_{\mathrm{seed}}
    \]
    \State Compute $\mathcal{L}_{\mathrm{user}}$ using Eq.~\ref{eq:luser}

    \If{$e \bmod X = 0$ \textbf{or} $e=1$}
        \State Run $K$-means on $\{z_u \mid u\in U^{+}_{\mathrm{seed}}\}$
        \State Compute $\mathcal{L}_{\mathrm{cls}}$ using Eq.~\ref{eq:lcls}
    \Else
        \State $\mathcal{L}_{\mathrm{cls}} \gets 0$
    \EndIf

    \State $\mathcal{L} \gets \mathcal{L}_{\mathrm{user}}+\lambda\mathcal{L}_{\mathrm{cls}}$
    \State Update $f$ by gradient descent on $\mathcal{L}$
\EndFor

\State \Return $f$
\end{algorithmic}
\end{algorithm}

\subsection{Hyperparameter Settings}\label{appendix:Hyperparameter Settings}

We report the hyperparameter configurations used in our experiments. Tables~\ref{tab:stance_detection_hyperparams} and~\ref{tab:twitter_both_hyperparams} and summarize the settings for stance detection and Twitter bot detection, respectively. Hyperparameters are tuned for each model and dataset. Additional search may further improve performance for specific embedding configurations.

\begin{table*}[htb]
  \centering
  \resizebox{\textwidth}{!}{%
    \begin{tabular}{@{} l c c c c c c @{}}
      \toprule
      Model                             & Hidden Dim & Learning Rate & Weight Decay & Dropout & Batch Size & Max Epoch \\
      \midrule
      GLAN (MT\_CSD)                & 512        & 1e-5          & 1e-4         & 0.2     & 32         & 100        \\
      BrLSTM (RumourEval19)         & 300        & 1e-4          & 1e-4         & 0.5     & 16         & 100        \\
      \bottomrule
    \end{tabular}%
  }
  \caption{Hyperparameter settings for stance detection models.}
  \label{tab:stance_detection_hyperparams}
\end{table*}

\begin{table*}[htb]
  \centering
  \resizebox{\textwidth}{!}{%
    \begin{tabular}{@{} l c c c c c c @{}}
      \toprule
      Model                             & Hidden Dim & Learning Rate & Weight Decay & Dropout & Batch Size & Max Epoch \\
      \midrule
      RoBERTa (T)                       & 128        & 1e-4          & 1e-5         & 0.5     & 96         & 60        \\
      RoBERTa (T,U)                     & 128        & 1e-4          & 1e-5         & 0.5     & 96         & 60        \\
      BotRGCN (T,G)                     & 256        & 5e-4          & 5e-5         & 0.5     & 128        & 300       \\
      BotRGCN (F,T,U,G)                 & 128        & 5e-4          & 1e-3         & 0.3     & 128        & 300       \\
      GAT (T,G)                         & 96         & 2e-4          & 5e-5         & 0.5     & 128        & 300       \\
      GAT (F,T,U,G)                     & 96         & 2e-4          & 5e-5         & 0.5     & 256        & 300       \\
      GCN (T,G)                         & 256        & 2e-4          & 1e-4         & 0.3     & 128        & 300       \\
      GCN (F,T,U,G)                     & 256        & 5e-4          & 1e-3         & 0.3     & 128        & 300       \\
      \bottomrule
    \end{tabular}%
  }
  \caption{Hyperparameter settings for Twitter bot detection models.}
  \label{tab:twitter_both_hyperparams}
\end{table*}


\subsection{Supplementary Results of the Ablation Experiments}\label{app:ablation_supp}

We report supplementary ablation results in Table~\ref{tab:ablation-sup}, which are omitted from the main text due to space limitations.

Table~\ref{tab:ablation-sup} evaluates \textsc{ValueGraph} under different training configurations. Using only Stage~1 yields limited performance across tasks. Adding the clustering loss $\mathcal{L}_{\text{cls}}$ generally improves task-aware discrimination, while incorporating the user-level consistency loss $\mathcal{L}_{\text{user}}$ further helps capture user interaction structure and value-aware relational information. Overall, the full model achieves the strongest and most stable performance across models and tasks, supporting the effectiveness and generalizability of \textsc{ValueGraph}.

\begin{table*}[htb!]
\centering
\begin{tabular}{l l l c c}
\toprule
\textbf{Task} & \textbf{Model/dataset} & \textbf{Configuration} &
\textbf{Accuracy} & 
\textbf{F1-score} \\
\midrule

\multirow{8}{*}{Stance Detection} 
& \multirow{4}{*}{GLAN (MT\_CSD)} 
    & Only Stage~1 & 41.2 & 40.7\\
&   & Stage~1 + $\mathcal{L}_{\text{cls}}$ & 50.4 & 34.3\\
&   & Stage~1 + $\mathcal{L}_{\text{user}}$ & 45.3 & 32.6\\
&   & Full & \textbf{63.0} & \textbf{58.0}\\

\cdashline{2-5}

& \multirow{4}{*}{BrLSTM (RumourEval19)} 
    & Only Stage~1 & 53.7 & 55.8\\
&   & Stage~1 + $\mathcal{L}_{\text{cls}}$ & 62.7 & 52.8\\
&   & Stage~1 + $\mathcal{L}_{\text{user}}$ & 53.9 & 45.7\\
&   & Full & \textbf{77.1} & \textbf{71.2}\\

\midrule










\multirow{16}{*}{Twitter Bot Detection}

& \multirow{4}{*}{RoBERTa (Twibot-22)}
    & Only Stage~1 & 63.6 & 68.9\\
&   & Stage~1 + $\mathcal{L}_{\text{cls}}$ & 64.9 & 67.8\\
&   & Stage~1 + $\mathcal{L}_{\text{user}}$ & 64.7 & 67.5\\
&   & Full & \textbf{65.9} & \textbf{68.9}\\

\cdashline{2-5}

& \multirow{4}{*}{BotGCN (Twibot-22)}
    & Only Stage~1 & 70.3 & 71.0\\
&   & Stage~1 + $\mathcal{L}_{\text{cls}}$ & 71.2 & 71.8\\
&   & Stage~1 + $\mathcal{L}_{\text{user}}$ & 70.9 & 71.7\\
&   & Full & \textbf{71.1} & \textbf{71.1}\\

\cdashline{2-5}

& \multirow{4}{*}{BotRGCN (Twibot-22)}
    & Only Stage~1 & 71.6 & 73.3\\
&   & Stage~1 + $\mathcal{L}_{\text{cls}}$ & 72.8 & 73.4\\
&   & Stage~1 + $\mathcal{L}_{\text{user}}$ & 72.5 & 73.1\\
&   & Full & \textbf{73.0} & \textbf{74.2}\\

\cdashline{2-5}

& \multirow{4}{*}{BotGAT (Twibot-22)}
    & Only Stage~1 & 73.1 & 70.3\\
&   & Stage~1 + $\mathcal{L}_{\text{cls}}$ & 72.9 & 72.6\\
&   & Stage~1 + $\mathcal{L}_{\text{user}}$ & 72.6 & 72.0\\
&   & Full & \textbf{74.7} & \textbf{73.0}\\

\bottomrule
\end{tabular}

\caption{Ablation results under different training configurations across two tasks: stance detection and Twitter bot detection. 
For Twitter bot detection, RoBERTa is evaluated in the UT setting, while BotGAT, BotGCN, and BotRGCN are evaluated in the FTUG setting.}
\label{tab:ablation-sup}
\end{table*}

\subsection{GPU Hours}\label{app:GPU_hours}

All models are run with fixed random seeds for reproducibility. Training and evaluation are conducted on a compute cluster with two NVIDIA H100 GPUs. The full suite of experiments consumes approximately 1,000 GPU hours, including two-stage \textsc{ValueGraph} pre-training and downstream evaluations under both baseline and \textsc{ValueGraph} configurations.

\textsc{ValueGraph} training consists of Stage 1, which uses textual and network signals to learn initial representations, and Stage 2, which incorporates inferred value signals to produce final user embeddings. These embeddings replace the corresponding conventional features in downstream evaluations.

\subsection{t-SNE Setting}\label{appendix:TSNE}
For consistency across models, we apply t-SNE to project high-dimensional embeddings into two dimensions, using \texttt{n\_components=2}, \texttt{perplexity=30}, \texttt{n\_iter=1000}, and \texttt{random\_state=42}. This configuration preserves local neighborhood structure while ensuring reproducibility.

\subsection{GPT-5.4 Prompt Templates}\label{appendix:gpt54_prompts}

We include GPT-5.4 as a text-only LLM baseline for stance detection and Twitter bot detection. GPT-5.4 receives only textual inputs, including the target, user posts, thread text, or profile text when available; it does not receive explicit reply-graph adjacency or message-passing structure. We use deterministic decoding with temperature set to 0. The prompts used for the two tasks are shown below.

\paragraph{Stance Detection Prompt.}
For stance detection, the model is given the target, the source post or thread context, and the user's available posts. It is asked to predict one of the task labels. In our default setting (\texttt{text\_mode=leaf}), MT-CSD uses the discussion topic together with the leaf comment, and RumourEval19 uses the reply text only.

\begin{quote}
\small
\textbf{Task:} Determine the user's stance toward the given target.

\textbf{Target:} \{target\}

\textbf{Conversation context:} \{thread\_text\}

\textbf{User posts:} \{user\_posts\}

Choose exactly one label from: \texttt{Favor}, \texttt{Against}, \texttt{None}.

Return only the label.
\end{quote}

In practice, for MT-CSD we instantiate \texttt{\{target\}} with the discussion topic and use the leaf comment as the textual input; for RumourEval19, we use the reply text as the input and omit separate thread or user-history fields in the default configuration. The implemented MT-CSD prompt is:

\begin{quote}
\small
You are annotating stance in a social-media discussion about the topic: \{topic\}.

Classify the STANCE of the following comment toward the discussion target.
Use exactly one label:
- favor: supports/agrees with the target stance
- against: opposes/disagrees with the target stance
- none: neutral, unrelated, or unclear stance

Comment:
\{text\}

Reply with exactly one word: favor, against, or none.
\end{quote}

For RumourEval19, we replace the label set with the dataset-specific labels:

\begin{quote}
\small
Choose exactly one label from: \texttt{Support}, \texttt{Deny}, \texttt{Query}, \texttt{Comment}.

Return only the label.
\end{quote}

The implemented RumourEval19 prompt is:

\begin{quote}
\small
You are annotating stance toward a rumor in social media.

Classify the STANCE of the following reply (SDQC scheme).
Use exactly one label:
- support: supports/agrees the rumor is true
- deny: refutes or disagrees with the rumor
- query: asks for evidence or clarification
- comment: neutral or unrelated to rumor veracity

Reply text:
\{text\}

Reply with exactly one word: support, deny, query, or comment.
\end{quote}

\paragraph{Twitter Bot Detection Prompt.}
For bot detection, the model is given the user's profile text and sampled posts. It is asked to classify the account as human or bot. In our TwiBot experiments, profile text is not available; we therefore provide only the user's test-split posts, aggregated at the account level.

\begin{quote}
\small
\textbf{Task:} Determine whether the following Twitter account is operated by a human or a bot.

\textbf{User profile:} \{profile\_text\}

\textbf{User posts:} \{sampled\_posts\}

Choose exactly one label from: \texttt{Human}, \texttt{Bot}.

Return only the label.
\end{quote}

In practice, we set \texttt{\{profile\_text\}} to empty and construct \texttt{\{sampled\_posts\}} by concatenating all available tweets from the same user in the test split, separated by \texttt{\textbackslash n---\textbackslash n}. The implemented prompt is:

\begin{quote}
\small
You are detecting whether a Twitter/X account is operated by a human or a bot.

Read the following tweets posted by ONE account (may be truncated). Classify the account type.
Use exactly one label:
- human: likely a real person
- bot: likely automated, spam, or bot-like

Tweets from this account:
\{text\}

Reply with exactly one word: human or bot.
\end{quote}

For all prompts, long inputs are truncated to fit the model context window while preserving the target, profile text, source post, and the most recent or most relevant user posts. In our implementation, the truncation limit is 12{,}000 characters. No examples from the test set are included in the prompt.

%% file: aaai2027.bib
@inproceedings{10.5555/3061053.3061153,
author = {Ma, Jing and Gao, Wei and Mitra, Prasenjit and Kwon, Sejeong and Jansen, Bernard J. and Wong, Kam-Fai and Cha, Meeyoung},
title = {Detecting rumors from microblogs with recurrent neural networks},
year = {2016},
booktitle = {IJCAI},
}

@article{trager2022moral,
  title={The moral foundations reddit corpus},
  author={Trager, Jackson and Ziabari, Alireza S and Davani, Aida Mostafazadeh and Golazizian, Preni and Karimi-Malekabadi, Farzan and Omrani, Ali and Li, Zhihe and Kennedy, Brendan and Reimer, Nils Karl and Reyes, Melissa and others},
  journal={arXiv:2208.05545},
  year={2022}
}

@Inproceedings{moralbert,
author = {Preniqi, Vjosa and Ghinassi, Iacopo and Ive, Julia and Saitis, Charalampos and Kalimeri, Kyriaki},
title = {MoralBERT: A Fine-Tuned Language Model for Capturing Moral Values in Social Discussions},
year = {2024},
booktitle = {GoodIT},
}

@inproceedings{nguyen2024measuring,
  title={Measuring moral dimensions in social media with mformer},
  author={Nguyen, Tuan Dung and Chen, Ziyu and Carroll, Nicholas George and Tran, Alasdair and Klein, Colin and Xie, Lexing},
  booktitle={ICWSM},
  year={2024}
}

@inproceedings{guo2023data,
  title={A data fusion framework for multi-domain morality learning},
  author={Guo, Siyi and Mokhberian, Negar and Lerman, Kristina},
  booktitle={ICWSM},
  year={2023}
}

@inproceedings{zangari2025me2,
  title={ME2-BERT: Are Events and Emotions what you need for Moral Foundation Prediction?},
  author={Zangari, Lorenzo and Greco, Candida M and Picca, Davide and Tagarelli, Andrea},
  booktitle={COLING},
  year={2025}
}

@inproceedings{ouyang2022training,
author = {Ouyang, Long and Wu, Jeff and Jiang, Xu and Almeida, Diogo and Wainwright, Carroll L. and Mishkin, Pamela and Zhang, Chong and Agarwal, Sandhini and Slama, Katarina and Ray, Alex and Schulman, John and Hilton, Jacob and Kelton, Fraser and Miller, Luke and Simens, Maddie and Askell, Amanda and Welinder, Peter and Christiano, Paul and Leike, Jan and Lowe, Ryan},
title = {Training language models to follow instructions with human feedback},
year = {2022},
booktitle = {NeurIPS},
}

@inproceedings{christiano2017rlhf,
     author = {Christiano, Paul F and Leike, Jan and Brown, Tom and Martic, Miljan and Legg, Shane and Amodei, Dario},
     booktitle = {NIPS},
     title = {Deep Reinforcement Learning from Human Preferences},
     year = {2017}
}

@inproceedings{pan2019social,
  title     = {Social Media-based User Embedding: A Literature Review},
  author    = {Pan, Shimei and Ding, Tao},
  booktitle = {IJCAI},
  year      = {2019},
}

@inproceedings{
Hu2020Strategies,
title={Strategies for Pre-training Graph Neural Networks},
author={Weihua Hu and Bowen Liu and Joseph Gomes and Marinka Zitnik and Percy Liang and Vijay Pande and Jure Leskovec},
booktitle={ICLR},
year={2020},
}

@conference{you2020graph,
  title={Graph contrastive learning with augmentations},
  author={You, Yuning and Chen, Tianlong and Sui, Yongduo and Chen, Ting and Wang, Zhangyang and Shen, Yang},
  booktitle={NeurIPS},
  year={2020}
}

@inproceedings{10.1145/3543507.3583379,
author = {Hou, Zhenyu and He, Yufei and Cen, Yukuo and Liu, Xiao and Dong, Yuxiao and Kharlamov, Evgeny and Tang, Jie},
title = {GraphMAE2: A Decoding-Enhanced Masked Self-Supervised Graph Learner},
year = {2023},
booktitle = {WWW},
}

@inproceedings{benton-etal-2016-learning,
    title = "Learning Multiview Embeddings of {T}witter Users",
    author = "Benton, Adrian  and
      Arora, Raman  and
      Dredze, Mark",
    booktitle = "ACL",
    year = "2016",
}

@inproceedings{ding-etal-2017-multi,
    title = "Multi-View Unsupervised User Feature Embedding for Social Media-based Substance Use Prediction",
    author = "Ding, Tao  and
      Bickel, Warren K.  and
      Pan, Shimei",
    booktitle = "EMNLP",
    year = "2017",
}

@inproceedings{rahimi-etal-2015-twitter,
    title = "{T}witter User Geolocation Using a Unified Text and Network Prediction Model",
    author = "Rahimi, Afshin  and
      Cohn, Trevor  and
      Baldwin, Timothy",
    booktitle = "ACL-IJCNLP",
    year = "2015",
}

@inproceedings{do2018twitter,
  title={Twitter user geolocation using deep multiview learning},
  author={Do, Tien Huu and Nguyen, Duc Minh and Tsiligianni, Evaggelia and Cornelis, Bruno and Deligiannis, Nikos},
  booktitle={ICASSP},
  year={2018},
}

@inproceedings{zhang2017user,
  title={User profile preserving social network embedding},
  author={Zhang, Daokun and Yin, Jie and Zhu, Xingquan and Zhang, Chengqi},
  booktitle={IJCAI},
  year={2017}
}

@inproceedings{10.1145/3178876.3186026,
author = {Zhang, Wei and Wang, Wen and Wang, Jun and Zha, Hongyuan},
title = {User-guided Hierarchical Attention Network for Multi-modal Social Image Popularity Prediction},
year = {2018},
booktitle = {WWW},
}

@inproceedings{ribeiro2018characterizing,
  title={Characterizing and detecting hateful users on twitter},
  author={Ribeiro, Manoel and Calais, Pedro and Santos, Yuri and Almeida, Virg{\'\i}lio and Meira Jr, Wagner},
  booktitle={ICWSM},
  year={2018}
}

@inproceedings{Perozzi2014DeepWalkOL,
  title={DeepWalk: online learning of social representations},
  author={Bryan Perozzi and Rami Al-Rfou and Steven Skiena},
  booktitle={SIGKDD},
  year={2014}
}

@inproceedings{Sun2020MultiStageSL,
  title={Multi-Stage Self-Supervised Learning for Graph Convolutional Networks},
  author={Ke Sun and Zhanxing Zhu and Zhouchen Lin},
  booktitle={AAAI},
  year={2020},
}

@inproceedings{Grover2016node2vecSF,
  title={node2vec: Scalable Feature Learning for Networks},
  author={Aditya Grover and Jure Leskovec},
  booktitle={SIGKDD},
  year={2016}
}

@inproceedings{Donnat2018LearningSN,
  title={Learning Structural Node Embeddings via Diffusion Wavelets},
  author={Claire Donnat and Marinka Zitnik and David Hallac and Jure Leskovec},
  booktitle={SIGKDD},
  year={2018}
}

@inproceedings{Zhang2019ProNEFA,
  title={ProNE: Fast and Scalable Network Representation Learning},
  author={Jie Zhang and Yuxiao Dong and Yan Wang and Jie Tang and Ming Ding},
  booktitle={IJCAI},
  year={2019}
}

@inproceedings{Tang2015LINELI,
  title={LINE: Large-scale Information Network Embedding},
  author={Jian Tang and Meng Qu and Mingzhe Wang and Ming Zhang and Jun Yan and Qiaozhu Mei},
  booktitle={WWW},
  year={2015}
}

@inproceedings{Zhao2021DataAF,
  title={Data Augmentation for Graph Neural Networks},
  author={Tong Zhao and Yozen Liu and Leonardo Neves and Oliver J. Woodford and Meng Jiang and Neil Shah},
  booktitle={AAAI},
  year={2021}
}

@inproceedings{hassani2020contrastive,
  title={Contrastive multi-view representation learning on graphs},
  author={Hassani, Kaveh and Khasahmadi, Amir Hosein},
  booktitle={ICML},
  year={2020},
}

@article{Hafidi2020GraphCLCS,
  title={GraphCL: Contrastive Self-Supervised Learning of Graph Representations},
  author={Hakim Hafidi and Mounir Ghogho and Philippe Ciblat and Ananthram Swami},
  journal={ArXiv},
  year={2020},
  volume={abs/2007.08025}
}

@inproceedings{Sun2020InfoGraphUA,
  title={InfoGraph: Unsupervised and Semi-supervised Graph-Level Representation Learning via Mutual Information Maximization},
 author ={Fan-Yun Sun and Jordan Hoffmann and Jian Tang},
  booktitle={ICLR},
  year={2020},
}

@article{Zhu2020DeepGC,
  title={Deep Graph Contrastive Representation Learning},
  author={Yanqiao Zhu and Yichen Xu and Feng Yu and Q. Liu and Shu Wu and Liang Wang},
  journal={ArXiv},
  year={2020},
  volume={abs/2006.04131}
}

@inproceedings{Qiu2020GCCGC,
  title={GCC: Graph Contrastive Coding for Graph Neural Network Pre-Training},
  author={Jiezhong Qiu and Qibin Chen and Yuxiao Dong and Jing Zhang and Hongxia Yang and Ming Ding and Kuansan Wang and Jie Tang},
  booktitle={SIGKDD},
  year={2020}
}

@inproceedings{Hu2020GPTGNNGP,
  title={GPT-GNN: Generative Pre-Training of Graph Neural Networks},
  author={Ziniu Hu and Yuxiao Dong and Kuansan Wang and Kai-Wei Chang and Yizhou Sun},
  booktitle={SIGKDD},
  year={2020}
}

@inproceedings{Lu2021LearningTP,
  title={Learning to Pre-train Graph Neural Networks},
  author={Yuanfu Lu and Xunqiang Jiang and Yuan Fang and Chuan Shi},
  booktitle={AAAI},
  year={2021}
}

@article{pheme,
  author = {Zubiaga, Arkaitz and Caba Heilbron, Andrea and Liakata, Maria and Procter, Rob and Tolmie, Peter and Bontcheva, Kalina},
  title = {PHEME: A dataset for fine-grained stance detection},
  journal = {arXiv:1610.07363},
  year = {2016},
}

@inproceedings{twitter_covid,
    title = "Detect Rumors in Microblog Posts for Low-Resource Domains via Adversarial Contrastive Learning",
    author = "Lin, Hongzhan  and
      Ma, Jing  and
      Chen, Liangliang  and
      Yang, Zhiwei  and
      Cheng, Mingfei  and
      Guang, Chen",
    booktitle = "Findings of NAACL",
    year = "2022",
}

@article{haidt2007moral,
  title={The moral mind: How five sets of innate intuitions guide the development of many culture-specific virtues, and perhaps even modules},
  author={Haidt, Jonathan and Joseph, Craig and others},
  journal={The innate mind},
  volume={3},
  pages={367--391},
  year={2007}
}

@article{schwartz2012refining,
  title     = {Refining the Theory of Basic Individual Values},
  author    = {Schwartz, Shalom H. and Cieciuch, Jan and Vecchione, Michele and Davidov, Eldad and Fischer, Ronald and Beierlein, Constanze and Ramos, Alice and Verkasalo, Markku and L{\"o}nnqvist, Jan-Erik and Demirutku, Kadriye and others},
  journal   = {Journal of Personality and Social Psychology},
  volume    = {103},
  number    = {4},
  pages     = {663},
  year      = {2012},
}

@book{hofstede2001culture,
  title     = {Culture's Consequences: Comparing Values, Behaviors, Institutions and Organizations Across Nations},
  author    = {Hofstede, Geert},
  year      = {2001},
  publisher = {Sage Publications}
}

@article{ARAQUE2020105184,
title = {MoralStrength: Exploiting a moral lexicon and embedding similarity for moral foundations prediction},
journal = {Knowledge-Based Systems},
volume = {191},
pages = {105184},
year = {2020},
author = {Oscar Araque and Lorenzo Gatti and Kyriaki Kalimeri},
}

@article{haidt_all,
author = {Graham, Jesse and Haidt, Jonathan and Nosek, Brian},
year = {2009},
month = {06},
pages = {1029-46},
title = {Liberals and Conservatives Rely on Different Sets of Moral Foundations},
volume = {96},
journal = {Journal of personality and social psychology},
}

@inproceedings{johnson2018classification,
  title={Classification of moral foundations in microblog political discourse},
  author={Johnson, Kristen and Goldwasser, Dan},
  booktitle={ACL},
  year={2018}
}

@article{mooijman2018moralization,
  title={Moralization in social networks and the emergence of violence during protests},
  author={Mooijman, Marlon and Hoover, Joe and Lin, Ying and Ji, Heng and Dehghani, Morteza},
  journal={Nature human behaviour},
  volume={2},
  number={6},
  pages={389--396},
  year={2018},
}

@inproceedings{mokhberian2020moral,
  title={Moral framing and ideological bias of news},
  author={Mokhberian, Negar and Abeliuk, Andr{\'e}s and Cummings, Patrick and Lerman, Kristina},
  booktitle={SocInfo},
  year={2020},
}

@inproceedings{lin2018acquiring,
  title={Acquiring background knowledge to improve moral value prediction},
  author={Lin, Ying and Hoover, Joe and Portillo-Wightman, Gwenyth and Park, Christina and Dehghani, Morteza and Ji, Heng},
  booktitle={ASONAM},
  year={2018},
}

@InProceedings{zhang2024enhancingstanceclassificationsocial,
author="Zhang, Hong
and Nguyen, Quoc-Nam
and Bhattacharya, Prasanta
and Gao, Wei
and Wong, Liang Ze
and Loh, Brandon Siyuan
and Simons, Joseph J. P.
and An, Jisun",
title="Enhancing Stance Classification on Social Media Using Quantified Moral Foundations",
booktitle="ASONAM",
year="2025",
}

@inproceedings{li2021self,
  title={Self-supervised learning with kernel dependence maximization},
  author={Li, Yazhe and Pogodin, Roman and Sutherland, Danica J and Gretton, Arthur},
  booktitle={NeurIPS},
  year={2021}
}

@inproceedings{nguyen-etal-2020-bertweet,
    title = "{BERT}weet: A pre-trained language model for {E}nglish Tweets",
    author = "Nguyen, Dat Quoc  and
      Vu, Thanh  and
      Tuan Nguyen, Anh",
    booktitle = "EMNLP",
    year = "2020",
}

@inproceedings{warner2024smarterbetterfasterlonger,
    title = "Smarter, Better, Faster, Longer: A Modern Bidirectional Encoder for Fast, Memory Efficient, and Long Context Finetuning and Inference",
    author = {Warner, Benjamin  and
      Chaffin, Antoine  and
      Clavi{\'e}, Benjamin  and
      Weller, Orion  and
      Hallstr{\"o}m, Oskar  and
      Taghadouini, Said  and
      Gallagher, Alexis  and
      Biswas, Raja  and
      Ladhak, Faisal  and
      Aarsen, Tom  and
      Adams, Griffin Thomas  and
      Howard, Jeremy  and
      Poli, Iacopo},
    booktitle = "ACL",
    year = "2025",
}

@inproceedings{gao2021simcse,
   title={{SimCSE}: Simple Contrastive Learning of Sentence Embeddings},
   author={Gao, Tianyu and Yao, Xingcheng and Chen, Danqi},
   booktitle={EMNLP},
   year={2021}
}

@inproceedings{gorrell-etal-2019-semeval,
    title = "{S}em{E}val-2019 Task 7: {R}umour{E}val, Determining Rumour Veracity and Support for Rumours",
    author = "Gorrell, Genevieve  and
      Kochkina, Elena  and
      Liakata, Maria  and
      Aker, Ahmet  and
      Zubiaga, Arkaitz  and
      Bontcheva, Kalina  and
      Derczynski, Leon",
    booktitle = "SemEval",
    year = "2019",
}

@article{kipf2016semi,
  title={Semi-supervised classification with graph convolutional networks},
  author={Kipf, Thomas N and Welling, Max},
  journal={arXiv:1609.02907},
  year={2016}
}

@article{velickovic2017graph,
  title={Graph attention networks},
  author={Velickovic, Petar and Cucurull, Guillem and Casanova, Arantxa and Romero, Adriana and Lio, Pietro and Bengio, Yoshua and others},
  journal={stat},
  volume={1050},
  number={20},
  pages={10--48550},
  year={2017}
}

@inproceedings{yun2019graph,
  title={Graph transformer networks},
  author={Yun, Seongjun and Jeong, Minbyul and Kim, Raehyun and Kang, Jaewoo and Kim, Hyunwoo J},
  booktitle={NeurIPS},
  year={2019}
}

@inproceedings{10.5555/3600270.3602825,
author = {Feng, Shangbin and Tan, Zhaoxuan and Wan, Herun and Wang, Ningnan and Chen, Zilong and Zhang, Binchi and Zheng, Qinghua and Zhang, Wenqian and Lei, Zhenyu and Yang, Shujie and Feng, Xinshun and Zhang, Qingyue and Wang, Hongrui and Liu, Yuhan and Bai, Yuyang and Wang, Heng and Cai, Zijian and Wang, Yanbo and Zheng, Lijing and Ma, Zihan and Li, Jundong and Luo, Minnan},
title = {TwiBot-22: towards graph-based twitter bot detection},
year = {2022},
booktitle = {NeurIPS},
}

@inproceedings{feng2021botrgcn,
  title     = {BotRGCN: Twitter Bot Detection with Relational Graph Convolutional Networks},
  author    = {Shangbin Feng and Herun Wan and Ningnan Wang and Minnan Luo},
  booktitle = {ASONAM},
  year      = {2021},
}

@article{lei2022bic,
  title   = {BIC: Twitter Bot Detection with Text-Graph Interaction and Semantic Consistency},
  author  = {Zhenyu Lei and Herun Wan and Wenqian Zhang and Shangbin Feng and Zilong Chen and Jundong Li and Qinghua Zheng and Minnan Luo},
  journal = {arXiv:2208.08320},
  year    = {2022},
  url     = {https://arxiv.org/abs/2208.08320}
}

@inproceedings{10.1145/3351095.3372879,
author = {Ribeiro, Manoel Horta and Ottoni, Raphael and West, Robert and Almeida, Virg\'{\i}lio A. F. and Meira, Wagner},
title = {Auditing radicalization pathways on YouTube},
year = {2020},
booktitle = {FAT},
}

@article{liu2019robertarobustlyoptimizedbert,
  title={Roberta: A robustly optimized bert pretraining approach},
  author={Liu, Yinhan and Ott, Myle and Goyal, Naman and Du, Jingfei and Joshi, Mandar and Chen, Danqi and Levy, Omer and Lewis, Mike and Zettlemoyer, Luke and Stoyanov, Veselin},
  journal={arXiv:1907.11692},
  year={2019}
}

@inproceedings{zeng-gao-2022-early,
    title = "{E}arly Rumor Detection Using Neural {H}awkes Process with a New Benchmark Dataset",
    author = "Zeng, Fengzhu  and
      Gao, Wei",
    booktitle = "NAACL",
    year = "2022",
}

@inproceedings{niu-etal-2024-challenge,
    title = "A Challenge Dataset and Effective Models for Conversational Stance Detection",
    author = "Niu, Fuqiang  and
      Yang, Min  and
      Li, Ang  and
      Zhang, Baoquan  and
      Peng, Xiaojiang  and
      Zhang, Bowen",
    booktitle = "LREC-COLING",
    year = "2024",
}

@inproceedings{alkhatib-2020-personal,
  title={Exploiting Personal Characteristics of Debaters for Predicting Persuasiveness},
  author={Khalid Al-Khatib and Michael V{\"o}lske and Shahbaz Syed and Nikolay Kolyada and Benno Stein},
  booktitle={ACL},
  year={2020},
}

@inproceedings{durmus-2018-beliefs,
  title={Exploring the Role of Prior Beliefs for Argument Persuasion},
  author={Esin Durmus and Claire Cardie},
  booktitle={NAACL},
  year={2018},
}

@article{zhao2024graphprompt,
  title={Graph Prompt Learning for Stance Detection},
  author={Zhao, Yifan and Xue, Zhihan and Zhang, Jipeng and Wei, Furu and Che, Wanxiang and Liu, Ting},
  journal={arXiv:2403.11145},
  year={2024}
}

@inproceedings{ijcai2022p533,
  title={Multi-Target Stance Detection with Bi-Directional Recursive Encoding and Classification},
  author={Yue, Haijun and He, Yulan and Shu, Kai and Liu, Huan},
  booktitle={IJCAI},
  year={2022},
}

@inproceedings{pick2022stem,
  title={Stem: unsupervised structural embedding for stance detection},
  author={Pick, Ron Korenblum and Kozhukhov, Vladyslav and Vilenchik, Dan and Tsur, Oren},
  booktitle={AAAI},
  year={2022}
}

@misc{yuan2019jointlyembeddinglocalglobal,
      title={Jointly embedding the local and global relations of heterogeneous graph for rumor detection}, 
      author={Chunyuan Yuan and Qianwen Ma and Wei Zhou and Jizhong Han and Songlin Hu},
      year={2019},
      eprint={1909.04465},
      archivePrefix={arXiv},
      primaryClass={cs.CL},
      url={https://arxiv.org/abs/1909.04465}, 
}

@article{T5,
author = {Raffel, Colin and Shazeer, Noam and Roberts, Adam and Lee, Katherine and Narang, Sharan and Matena, Michael and Zhou, Yanqi and Li, Wei and Liu, Peter J.},
title = {Exploring the limits of transfer learning with a unified text-to-text transformer},
year = {2020},
volume = {21},
number = {1},
journal = {J. Mach. Learn. Res.},
}

@incollection{GRAHAM201355,
title = {Chapter Two - Moral Foundations Theory: The Pragmatic Validity of Moral Pluralism},
editor = {Patricia Devine and Ashby Plant},
series = {Advances in Experimental Social Psychology},
publisher = {Academic Press},
volume = {47},
pages = {55-130},
year = {2013},
issn = {0065-2601},
doi = {https://doi.org/10.1016/B978-0-12-407236-7.00002-4},
url = {https://www.sciencedirect.com/science/article/pii/B9780124072367000024},
author = {Jesse Graham and Jonathan Haidt and Sena Koleva and Matt Motyl and Ravi Iyer and Sean P. Wojcik and Peter H. Ditto}
}

@InProceedings{Caron_2018_ECCV,
author = {Caron, Mathilde and Bojanowski, Piotr and Joulin, Armand and Douze, Matthijs},
title = {Deep Clustering for Unsupervised Learning of Visual Features},
booktitle = {Proceedings of the European Conference on Computer Vision (ECCV)},
month = {September},
year = {2018}
}
